\newtheorem{theorem}{Theorem}
\newtheorem{assumption}{Assumption}
\newtheorem{proof}{Proof}
\newtheorem{corollary}{Corollary}
\newtheorem{lemma}{Lemma}
\newcommand*{\QEDB}{\hfill\ensuremath{\square}}%
\title{Distributed Asynchronous Dual Free Stochastic Dual Coordinate Ascent}
\begin{document}

\author{Zhouyuan Huo 
	\and 
	Heng Huang
}

\maketitle

\begin{abstract}
	The primal-dual distributed optimization methods have broad large-scale machine learning applications. Previous primal-dual distributed methods are not applicable when the dual formulation is not available, \emph{e.g.} the sum-of-non-convex objectives. Moreover, these algorithms and theoretical analysis are based on the fundamental assumption that the computing speeds of multiple machines in a cluster are similar.  However, the straggler problem is an unavoidable practical issue in the distributed system because of the existence of slow machines. Therefore, the total computational time of the distributed optimization methods is highly dependent on the slowest machine. In this paper, we address these two issues by proposing distributed asynchronous dual free stochastic dual coordinate ascent algorithm for distributed optimization. Our method does not need the dual formulation of the target problem in the optimization. We tackle the straggler problem through asynchronous communication and the negative effect of slow machines is significantly alleviated. We also analyze the convergence rate of our method and prove the linear convergence rate even if the individual functions in objective are non-convex. Experiments on both convex and non-convex loss functions are used to validate our statements.
\end{abstract}

\section{Introduction}
In this paper, we consider optimizing the $\ell_2$-norm regularized empirical loss minimization problem which is arising ubiquitously in supervised machine learning:
\begin{eqnarray}
	\label{primal}
	\min\limits_{w \in \mathbb{R}^d} P(w )  := 	\min\limits_{w \in \mathbb{R}^d} {\frac{1}{n} \sum\limits_{i=1}^n \phi_i( w) }+  \frac{\lambda}{2} \|w\|^2_2.
\end{eqnarray}
We let $f(w) = {\frac{1}{n} \sum_{i=1}^n \phi_i( w) }$ and  $w \in \mathbb{R}^d$ be the linear predictor to be optimized. There are many applications falling into this formulation, such as classification, regression, and principal component analysis (PCA).  In classification, given features $x_i \in \mathbb{R}^d$ and labels  $y_i \in \{1, -1\}$, we obtain Support Vector Machine (SVM) when we let $\phi_i( w) = \max \{ 0, 1-  y_i x_i^T w\}$. In regression, given features $x_i \in \mathbb{R}^d$ and response $y_i \in \mathbb{R} $, we have Ridge Regression problem if $\phi_i( w) = (y_i - x_i^T w)^2$.
Recently, \cite{garber2015fast,allen2016improved} showed that the problem of PCA can be solved through convex optimization. Supposing $C = \frac{1}{n} \sum_{i=1}^n x_i x_i^T$ be normalized covariance matrix,  \cite{garber2015fast} showed that approximating the principle component of $A$ is equivalent to minimizing $f(w) =  \frac{1}{2} w^T(\mu I - C)w- b^Tw$ given $\mu >0$ and $b\in \mathbb{R}^d$. Defining $\phi_i(w) =\frac{1}{2} w^T((\mu -\frac{\lambda}{2}) I - x_ix_i^T)w - b^Tw$ and $\mu > \sigma_1(C) + \frac{\lambda}{2}$ where $\sigma_1(C)$ denotes the largest singular value of $C$, it also falls into  problem (\ref{primal}). In this case, $f(w)$ is convex while each $\phi_i(w)$ is probably non-convex.

Distributed machine learning methods are required to optimize problem (\ref{primal}) when the dataset is distributed over multiple machines.
In \cite{jaggi2014communication}, the authors proposed communication-efficient distributed dual coordinate ascent (CoCoA) for primal-dual distributed optimization. In each iteration, the CoCoA framework allows workers to optimize subproblems independently at first. After that, it calls ``Reduce" operation to collect local solution from all workers, and updates global variable and broadcasts the up-to-date global variable to workers in the end. It uses stochastic dual coordinate ascent (SDCA) as the local solver which is one of the most successful methods proposed for solving the problem (\ref{primal}) \cite{hsieh2008dual,shalev2013accelerated}. In \cite{shalev2013stochastic}, the authors proved that SDCA has linear convergence if the convex function $\phi_i(w)$ is smooth, which is much faster than stochastic gradient descent (SGD).   \cite{yang2013trading, takavc2015distributed} also proposed distributed SDCA and analyzed the tradeoff between computation and communication.
\cite{ma2015adding,ma2017distributed} accelerated the CoCoA by allowing for more aggressive updates, and proved that CoCoA has linear primal-dual convergence for the smooth convex problem and sublinear convergence for the non-smooth convex problem. However, there are two issues for these primal-dual distributed methods. Firstly, all of them use SDCA as the local solver. SDCA is not applicable when the dual problem is unknown, \emph{e.g.} $\phi_i$ is non-convex. Therefore, the applications of these primal-dual distributed methods are limited. Secondly, all of these methods assume that the workers have similar computing speed, which is not true in practice. Straggler problem is an unavoidable practical issue in the distributed optimization. Thus, the computing time of CoCoA and distributed SDCA is dependent on the slowest worker. Even if there is only one bad worker, they will work far slower than expectation.

In \cite{shalev2015sdca, shalev2016sdca}, the authors proposed dual free stochastic dual coordinate ascent (dfSDCA). It was proved to admit similar convergence rate to SDCA while it did not rely on duality at all. However there is no distributed optimization method using dfSDCA, and its convergence analysis is still unknown yet.

In this paper, we solve the above two challenging issues in previous primal-dual distributed optimization methods by proposing Distributed Dual Free Stochastic Dual Coordinate Ascent (Dis-dfSDCA).  We use dfSDCA as the local solver such that Dis-dfSDCA can be applied to the non-convex problem easily. We alleviate the effect of straggler problem by allowing asynchronous communication between server and workers. As shown in Figure \ref{intro}, the server does not wait and workers may store the stale global variable in the local. We also analyze the convergence rate of our method and prove that it admits linear convergence rate even if the individual losses  ($\phi_i$) are non-convex, as long as the sum of losses $f$ is convex. Finally, we conduct simulation on the distributed system with straggler problem. Experimental results verify our theoretical conclusions and show that our method works well in practice.

\begin{figure}[t]
	\centering
	\includegraphics[width=3.2in]{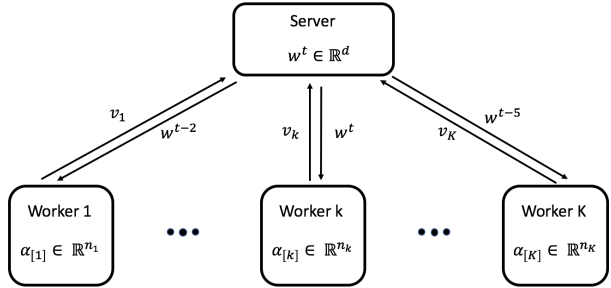}
	\caption{Distributed asynchronous dual free stochastic dual coordinate ascent for parameter server framework. In iteration $t$, the server receives gradient message $v_k$ from worker $k$, and sends the up-to-date $w^{t}$ back to the worker $k$. Global variables in other workers are stale. For example worker $1$ and $K$ store stale global variables $w^{t-2}$ and $w^{t-5}$ respectively. }
	\label{intro}
\end{figure}

\section{Preliminary}
To optimize the primal problem (\ref{primal}), we often derive and optimize its dual problem alternatively:
\begin{eqnarray}
	\max\limits_{\alpha \in \mathbb{R}^n} D(\alpha):=	\max \limits_{\alpha \in \mathbb{R}^n} \frac{1}{n} \sum\limits_{i=1}^n - \phi_i^*(-\alpha_i) - \frac{\lambda}{2} \|\frac{1}{\lambda n} A\alpha \|^2_2\,,
	\label{dual}
\end{eqnarray}
where $\phi_i^*$ is the convex conjugate function to $\phi_i$, $A  = [x_1,x_2,...x_n] \in \mathbb{R}^{d\times n}$ denotes data matrix and $\alpha \in \mathbb{R}^n$ denotes dual variable. We can use stochastic gradient descent (SGD) to optimize primal problem (\ref{primal}), however, there are always two issues: (1) SGD is too aggressive at the beginning of the optimization; (2) it does not have a clear stopping criterion. One of the biggest advantages of optimizing the dual problem is that we can keep tracking the duality gap $G(\alpha)$ to monitor the progress of optimization. Duality gap is defined as: $G(\alpha) = P(w(\alpha)) - D(\alpha)$, where $P(w(\alpha))$ and $D(\alpha)$ denote objective values of primal problem and dual problem respectively. If $w^*$ is the optimal solution of primal problem (\ref{primal}) and $\alpha^*$ is the optimal solution of dual problem (\ref{dual}), the primal-dual relation always holds that:
\begin{eqnarray}
	\label{primal_dual}
	w^* = w(\alpha^*) = \frac{1}{\lambda n} A \alpha^*\,.
\end{eqnarray}

\subsection{Stochastic Dual Coordinate Ascent}
In \cite{shalev2013stochastic}, the authors proposed stochastic dual coordinate ascent (SDCA) to optimize the dual problem (\ref{dual}). The pseudocode of SDCA is presented in Algorithm \ref{sdca}. In iteration $t$, given sample $i$ and other dual variables $\alpha_{j\neq i}$  fixed, we maximize the following subproblem:
\begin{eqnarray}
	\label{dual_sub}
	\max \limits_{\Delta \alpha_i \in \mathbb{R}} -\frac{1}{n}  \phi_i^*(-(\alpha_i^{t} + \Delta \alpha_i )) - \frac{\lambda}{2} \| w^{t} + \frac{1}{\lambda n}\Delta \alpha_i x_i \|^2_2
\end{eqnarray}
$e_i$ denotes coordinate vector of size $n$, where element $i$ is $1$ and other elements  are $0$. In their paper, the authors proved that SDCA admits linear convergence rate for smooth loss, which is much faster than stochastic gradient descent (SGD).  An accelerated SDCA was also proposed in \cite{shalev2013accelerated}. However, SDCA is not applicable when it is difficult to derive the dual problem, \emph{e.g.} $\phi_i$ are non-convex.

\begin{algorithm}[h]
	\caption{SDCA}
	\begin{algorithmic}[1]
		\STATE Initialize $\alpha^0$ and $w^0 = w(\alpha^0)$;
		\FOR{$t=0,1,2,\dots, T-1$}
		\STATE Randomly sample $i$ from $\{1,2,...,n\}$;
		\STATE Find $\Delta \alpha_i $ to maximize the subproblem (\ref{dual_sub});
		\STATE Update dual variable $\alpha$ through: \\
		\hspace{1.5cm} $\alpha^{t+1}  \leftarrow \alpha^{t} + \Delta \alpha_i e_i  $;
		\STATE Update primal variable $w$ through:\\
		\hspace{1.5cm} $w^{t+1} \leftarrow w^{t} + \frac{1}{\lambda n} \Delta \alpha_i x_i$;
		\ENDFOR
	\end{algorithmic}
	\label{sdca}
\end{algorithm}

\subsection{Dual Free Stochastic Dual Coordinate Ascent}
To address the limitation of SDCA,  \cite{shalev2015sdca} proposes Dual Free Stochastic Coordinate Ascent (dfSDCA) which has similar convergence property to SDCA. The pseudocode of dfSDCA is presented in Algorithm \ref{dfsdca}.
Although we keep vector $\alpha \in \mathbb{R}^n$ in the optimization, the derivation of dual problem is not necessary for dfSDCA. According to the update rule of $\alpha$ and $w$ in the algorithm, the primal-dual relation (\ref{primal_dual}) also holds for dfSDCA. The drawback of dfSDCA is that it is space-consuming to store $\alpha$, whose space complexity $O(nd)$. We can reduce it to $O(n)$ if $\nabla \phi_i(w)$ can be written as $\nabla \phi_i(x_i^Tw)x_i$. In \cite{he2015dual}, the authors accelerated dfSDCA by using non-uniform sampling strategy in each iteration and proved that it admits faster convergence.

\begin{algorithm}[h]
	\caption{Dual Free SDCA}
	\begin{algorithmic}[1]
		\STATE Initialize dual variable $\alpha^0 = (\alpha_0^0, ..., \alpha_n^0)$ where $\forall i, \alpha_i^0 \in \mathbb{R}^d$,  primal variable $w^0 = w(\alpha^0)$;
		\FOR{$t=0,1,2,\dots, T-1$}
		\STATE Randomly sample $i$ from $\{1,2,...,n\}$;
		\STATE Compute dual residue $\kappa$ through: \\
		\hspace{1.5cm}$ \kappa  \leftarrow \nabla \phi_i(w^{ t}) + \alpha_i^t$;
		\STATE Update  dual variable $ \alpha_i$ through:\\
		\hspace{1.5cm} $\alpha_i^{t+1} \leftarrow \alpha_i^t - \eta \lambda   n \kappa$;
		\STATE Update primal variable $w$ through:\\
		\hspace{1.5cm} $w^{ t+1} \leftarrow w^{t} - \eta \kappa$;
		\ENDFOR
	\end{algorithmic}
	\label{dfsdca}
\end{algorithm}

\section{Distributed Asynchronous Dual Free Stochastic Dual Coordinate Ascent}
In this section, we propose Distributed Asynchronous Dual Free Stochastic Coordinate Ascent (Dis-dfSDCA) for distributed optimization. Dis-dfSDCA fits for any parameter server framework, where the star-shape network is used. We assume that there are $n$ samples in the dataset, and they are evenly distributed over $K$ workers. In  worker $k$, there are $n_k$ samples.  It is satisfied that $n = \sum_{k=1}^K n_k $. Different from  sequential dfSDCA, we split the update of dual variable and primal variable into different nodes. The pseudocodes of Dis-dfSDCA for server node and worker nodes are presented in Algorithm \ref{alg2} and Algorithm \ref{alg1} respectively.

\subsection{Update Global Variable $w$ on Server}
The up-to-date global variable $w \in \mathbb{R}^d$ is stored and updated on the server. Initially, $w$ is set to be vector zero. At the beginning of each iteration, the server receives gradient message $v_k$ from arbitrary worker $k$ and let $v^t = v_k$. Then it updates the global variable through:
\begin{eqnarray}
	w^{s, t+1} = w^{s,t} - \eta v^t
\end{eqnarray}
Finally, it sends the up-to-date global variable back to the worker $k$ for further computation.  Asynchronous method is robust to straggler problem because it allows for updating the global variable when receiving from only one worker. However, if the $w$ in the worker is too stale, it may lead the algorithm to diverge.  Therefore, we induce two loops in our algorithm. Server broadcasts the latest global variable $w$ to all workers after every $T$ iterations. In this way, we prevent the problem of divergence and keep the advantage of asynchronous communication at the same time.
Algorithm \ref{alg2} summarizes the pseudocode on the server.
\begin{algorithm}[h]                      
	\caption{Dis-dfSDCA (Server) }         
	\label{alg2}                           
	\begin{algorithmic}                    
		\STATE Initialize $w \in \mathbb{R}^d$, $ \eta$
		\FOR{$s= 0, 1,..., S-1 $ }
		\FOR{$t = 0, 1,...,T-1$}
		\STATE	Receive gradient message $v^{s,t} = v_k$ from worker $k$;
		\STATE Update global variable $w^{s+1, t+1}$ through:\\
		\hspace{1.5cm} $w^{s, t+1} \leftarrow w^{s,t} - \eta v^{s,t }$;
		\STATE Send $w^{s, t+1}$ back to worker $k$ ;
		\ENDFOR
		\STATE $w^{s+1, 0} = w^{s, T}$
		\STATE Broadcast the up-to-date global variable $w^{s+1,0} $ to all workers.
		\ENDFOR
	\end{algorithmic}
\end{algorithm}

In Algorithm \ref{alg2}, we use the update of vanilla dfSDCA in the server. \cite{shalev2016sdca} proposed accelerated dfSDCA by using ``Catalyst" algorithm of \cite{lin2015universal}. It is proved to admit faster convergence rate by a constant factor. Our Algorithm \ref{alg2} can also be extended to the accelerated version easily. In our paper, we only consider the vanilla version and analyze the convergence rate of our algorithm.

\subsection{Update Local Variable $\alpha$ on  Worker}
In the distributed optimization, workers are responsible for the gradient computation which is the main workload during the optimization. We take arbitrary worker $k$ as an example.
Dual variable $\alpha_{[k]} \in \mathbb{R}^{n_k}$ is only stored and updated in the worker $k$, each $\alpha_i$ is corresponding to sample $i$. Initially, local variable $\alpha_{[k]}$ is set to be vector zero. After receiving stale global variable $w^{s, d(t)} \in \mathbb{R}^{d}$ from the server, worker $k$ computes the dual residue $\kappa$ and updates local variable $\alpha_i$ and gradient message $v_k$ for $H$ iterations.
Samples $I_t$ are randomly selected in the local dataset, and we set $|I_t|=H$. 	
\begin{algorithm}[h]                      
	\caption{Dis-dfSDCA (Worker $k$)}         
	\label{alg1}                           
	\begin{algorithmic}
		\STATE Initialize $\alpha_{[k]} \in \mathbb{R}^{d \times n_k }$, $\eta$, $H$ 
		\REPEAT
		\STATE Receive global variable $w^{s, d(t)}$ from server;
		\STATE	Initialize gradient message: $v_k \leftarrow {0}$;
		\STATE Randomly select samples $I_t$ from $\{1,\cdots,n_k \}$ where $|I_t|=H$;
		\FOR{sample $i$ in $I_t$}
		\STATE Compute dual residue $\kappa$ through: \\
		\hspace{1.5cm}$ \kappa  \leftarrow \nabla \phi_i(w^{s, d(t)}) + \alpha_i$;
		\STATE Update  local dual variable $ \alpha_i$ through:\\
		\hspace{1.5cm} $\alpha_i \leftarrow \alpha_i - \eta \lambda   n \kappa $;
		\STATE Update  gradient message $v_k$ through:\\
		\hspace{1.5cm} $ v_k \leftarrow   v_k + \kappa$;
		\ENDFOR
		\STATE Send  gradient message $v_k$ to server;
		\UNTIL{Termination}
	\end{algorithmic}
\end{algorithm}
In each iteration, worker $k$ selects a sample $i$ randomly and computes the dual residue  $\kappa$ for coordinate $i$ of the dual variable through:
\begin{eqnarray}
	\kappa = \nabla \phi_i(w^{s, d(t)}) + \alpha_i
\end{eqnarray}
Dual residue can also be viewed as the gradient in Stochastic Gradient Descent. When we obtain optimal dual variable $\alpha^*$  and primal variable $w^*$, $\kappa$ should  be $0$. Therefore, it is satisfied that $\alpha_i^*=-\nabla \phi_i(w^*)$.
Then worker $k$ updates local dual variable $\alpha_i$ and gradient message $v_k$ separately through:
\begin{eqnarray}
	\alpha_i& = &\alpha_i- \eta \lambda n \kappa, \hspace{0.3cm}  i \in I_t   \\
	v_k & =& v_k + \kappa
\end{eqnarray}
Because there is only one $\alpha_i$ in the cluster, it is always up-to-date.
After $H$ iterations, the worker $k$ sends gradient message $v_k$ to the server.   From the update rule in our algorithm, it is easy to know that the well-known primal-dual relation in the equation (\ref{primal_dual}) is always satisfied.
The pseudocode of Dis-dfSDCA in worker node $k$ is described in Algorithm \ref{alg1}. 	

In Algorithm \ref{alg1},  we use vanilla dfSDCA in the worker which samples with uniform distribution.  There are also other sampling techniques proposed to accelerate dfSDCA. As per the sampling strategy in \cite{shalev2015sdca,he2015dual,shalev2016sdca,qu2015stochastic}, there are three options: uniform sampling, importance sampling, and adaptive sampling. In importance sampling strategy \cite{shalev2016sdca}, it first computes the fixed probability distribution $p_i$ using smoothness parameter of each function $\phi_i$, then selects samples following this probability. In adaptive sampling strategy \cite{he2015dual}, it computes the adaptive probability distribution $p_i$ using dual residue $\kappa$ for each sample every iteration, then selects samples following this probability. Both of them are proved to admit faster convergence than vanilla dfSDCA with uniform sampling.  We only consider the uniform sampling strategy, and analyze its corresponding convergence rate in our paper. However, other sampling techniques are straightforward to be applied to our distributed method.

\section{Convergence Analysis}
In this section, we provide the theoretical convergence analysis of Dis-dfSDCA. For the case of convex losses $\phi_i$, we prove that Dis-dfSDCA admits linear convergence rate.
If losses $\phi_i$ are non-convex, we also prove linear convergence rate as long as the sum-of-non-convex objectives $f$ is convex.

We make the following assumptions for the primal problem (\ref{primal}) for further analysis. All of them are common assumptions in the theoretical analysis for the asynchronous stochastic methods.
\begin{assumption} [Lipschitz Constant]
	We assume $\nabla \phi_i$ is Lipschitz continuous, and there is Lipschitz constant $L$ such that $\forall x, y \in \mathbb{R}^d$:
	\begin{eqnarray}
		\|\nabla \phi_i(x) - \nabla \phi_i(y)\|_2 \leq L \|x-y \|_2
	\end{eqnarray}
	We can also know that $P$ is $(L+\lambda)$-smooth:
	\begin{eqnarray}
		\|\nabla P(x) - \nabla P(y)\|_2 \leq (L+\lambda) \|x-y \|_2
	\end{eqnarray}

\end{assumption}
\begin{assumption} [Maximum Time Delay]
	We assume that the maximum time delay of the global variable in each worker is upper bounded by $\tau$, such that:
	\begin{eqnarray}
		d(t) \geq t - \tau
	\end{eqnarray}
	$\tau$ is relevant to the number of workers $K$ in the system. We can also control $\tau$ through inner iteration $T$ in our algorithm.
	\label{time_delay}
\end{assumption}
\subsection{Convex Case}
In this section, we assume that the losses $\phi_i$ are convex, and prove that our method admits linear convergence.
\begin{assumption} [Convexity]
	We assume  losses $\phi_i$ are convex, such that $\forall x , y \in \mathbb{R}^d$:
	\begin{eqnarray}
		\phi_i(x) \geq \phi_i(y) + \nabla \phi_i(y)^T(x-y)\,.
	\end{eqnarray}
\end{assumption}
In our algorithm, dual variables $\alpha_{[1]},..., \alpha_{[K]}$ are stored in local workers. For worker $k$, there is no update of $\alpha_{[k]}$ from $d(t)$ to $t$. Therefore, it is always true that $\alpha_{[k]}^{s,t}= \alpha_{[k]}^{s,d(t)}$.
For brevity, we write $v^{s,t}$, $w^{s,t}$ and $\alpha^{s,t}$ as $v^t$, $w^t$ and $\alpha^t$. According to our algorithm, we know that:
\begin{eqnarray}
	v^{t} =  \sum\limits_{i\in I_t} \left(\nabla \phi_i(w^{d(t)}) + \alpha_i^{d(t)} \right)  =
	\sum\limits_{i\in I_t} v_i^t
\end{eqnarray}
where $|I_t|=H$ and $\mathbb{E}[v_i^{t}] = \nabla P(w^{d(t)})$. In our analysis, we also assume that there are no duplicate samples in $I_t$.
To analyze the convergence rate of our method, we need to prove  the following Lemma \ref{lem2} at first.	
\begin{lemma}
	\label{lem2}
	Let $w^*$ be the global solution of $P(w)$, and  $\alpha_i^*=-\nabla \phi_i(w^*)$.  Following the proof in \cite{shalev2015sdca}, we define $A_t$ and $B_t$ as follows:
	\begin{eqnarray}
		A_t &= &\mathbb{E} \| \alpha_i^{t} - \alpha_i^* \|^2\\
		B_t& = & \mathbb{E}\|w^{t} - w^* \|^2
	\end{eqnarray}
	According to  our algorithm, we can prove that $A_{t+1}$ and $B_{t+1}$ are upper bounded:
	\begin{eqnarray}
		\label{ineq_a}
			\mathbb{E}[A_{t+1} - A_{t} ] 
		&\leq&  - \eta \lambda H  \mathbb{E}\|\alpha_i^{t} - \alpha_i^*\|^2 - 2\eta  HL \lambda^2 \mathbb{E}\|w^t - w^*\|^2 + 4\eta \lambda H  L \left( P(x^t) - P(w^*)\right)\nonumber \\
		&& - \eta \lambda(1-\eta \lambda n) \mathbb{E}\|v^t\|^2 + 2 \lambda \tau H L^2 \eta^3 \sum\limits_{j=d(t)}^{t-1}\mathbb{E}\|v^j\|^2
	\end{eqnarray}
	\begin{eqnarray}
		\label{ineq_b}
			\mathbb{E} [B_{t+1} - B_{t}] 
		& \leq & -2\eta \left( P(w^{d(t)}) - P(w^*)\right)   + \eta^2 \mathbb{E} \|v^t\|^2
		  -2\eta \left< w^t - w^{d(t)}, \nabla  P(x^{d(t)})\right>
	\end{eqnarray}
\end{lemma}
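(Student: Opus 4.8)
The plan is to establish the two bounds separately, starting with the simpler recursion for $B_t$ and then tackling the more delicate one for $A_t$. Throughout I will use the two structural facts already recorded in the excerpt: the unbiasedness $\mathbb{E}[v_i^t]=\nabla P(w^{d(t)})$ of the per-sample message, and the observation that the staleness is a short sum of past server updates, namely $w^t-w^{d(t)}=\eta\sum_{j=d(t)}^{t-1}v^j$, which by Assumption \ref{time_delay} contains at most $\tau$ summands.

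For inequality (\ref{ineq_b}) I would start from the server update $w^{t+1}=w^t-\eta v^t$ and expand
\[
B_{t+1}=\mathbb{E}\|w^t-w^*-\eta v^t\|^2 = B_t - 2\eta\,\mathbb{E}\langle w^t-w^*,\,v^t\rangle + \eta^2\,\mathbb{E}\|v^t\|^2.
\]
Taking the conditional expectation over the sampled set and invoking unbiasedness, the inner-product term is governed by $\langle w^t-w^*,\,\nabla P(w^{d(t)})\rangle$, which I would split as $w^t-w^*=(w^{d(t)}-w^*)+(w^t-w^{d(t)})$. Applying the convexity of $P$ (inherited from Assumption 3 on the $\phi_i$) to the first piece gives $\langle w^{d(t)}-w^*,\nabla P(w^{d(t)})\rangle\ge P(w^{d(t)})-P(w^*)$, while the second piece is retained as the explicit staleness correction $-2\eta\langle w^t-w^{d(t)},\nabla P(w^{d(t)})\rangle$. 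This reproduces (\ref{ineq_b}) directly, so $B_t$ requires no hard work beyond the delayed-gradient bookkeeping.

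For inequality (\ref{ineq_a}) I would expand the dual update $\alpha_i^{t+1}=\alpha_i^t-\eta\lambda n\,\kappa$ with $\kappa=\nabla\phi_i(w^{d(t)})+\alpha_i^t$, rewriting $\kappa=(\nabla\phi_i(w^{d(t)})-\nabla\phi_i(w^*))+(\alpha_i^t-\alpha_i^*)$ via $\alpha_i^*=-\nabla\phi_i(w^*)$. Expanding $\|\alpha_i^{t+1}-\alpha_i^*\|^2$ and taking the conditional expectation over the $H$ sampled coordinates produces a contractive term $-\eta\lambda H\|\alpha_i^t-\alpha_i^*\|^2$, a term in $\|v^t\|^2$ carrying the factor $(1-\eta\lambda n)$ from the quadratic part of the update, and a cross term involving $\langle \alpha_i^t-\alpha_i^*,\ \nabla\phi_i(w^{d(t)})-\nabla\phi_i(w^*)\rangle$. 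To convert the gradient-difference quantities into the objective gap $P(w^t)-P(w^*)$ and the distance $\|w^t-w^*\|^2$, I would use the $L$-smoothness and convexity of each $\phi_i$ (Assumptions 1 and 3), in particular the bound $\|\nabla\phi_i(w^t)-\nabla\phi_i(w^*)\|^2\le 2L\bigl(\phi_i(w^t)-\phi_i(w^*)-\langle\nabla\phi_i(w^*),w^t-w^*\rangle\bigr)$, whose average over $i$ yields the $4\eta\lambda HL(P(w^t)-P(w^*))$ and $-2\eta HL\lambda^2\|w^t-w^*\|^2$ contributions.

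The main obstacle, and the only place the asynchrony genuinely enters, is that every gradient is evaluated at the stale iterate $w^{d(t)}$ rather than at the current $w^t$. To control this I would add and subtract $\nabla\phi_i(w^t)$, bound $\|\nabla\phi_i(w^{d(t)})-\nabla\phi_i(w^t)\|\le L\|w^{d(t)}-w^t\|$ by Assumption 1, and then use the staleness identity together with Cauchy--Schwarz over the at-most-$\tau$ terms to get $\|w^t-w^{d(t)}\|^2\le \tau\eta^2\sum_{j=d(t)}^{t-1}\|v^j\|^2$. Inserting this into a Young's inequality applied to the cross term is exactly what generates the delay penalty $2\lambda\tau HL^2\eta^3\sum_{j=d(t)}^{t-1}\|v^j\|^2$ in (\ref{ineq_a}). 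The delicate point is choosing the Young's-inequality weights so that this penalty is cubic in $\eta$ while the contractive terms $-\eta\lambda H\|\alpha_i^t-\alpha_i^*\|^2$ and $-\eta\lambda(1-\eta\lambda n)\mathbb{E}\|v^t\|^2$ remain linear in $\eta$; this separation in powers of $\eta$ is what will later let the negative terms dominate the accumulated delay once $\eta$ is taken small in the convergence theorem.
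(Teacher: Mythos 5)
Your proposal is correct and follows essentially the same route as the paper's own proof: for $B_t$ the expansion, unbiasedness, splitting $w^t-w^*=(w^{d(t)}-w^*)+(w^t-w^{d(t)})$, and convexity of $P$ are identical, and for $A_t$ your direct expansion of $\|\alpha_i^t-\alpha_i^*-\eta\lambda n\,\kappa\|^2$ with the cross term handled by polarization is algebraically the same as the paper's convex-combination identity $\alpha_i^{t+1}=(1-\beta)\alpha_i^t+\beta(-\nabla\phi_i(w^{d(t)}))$, after which both arguments invoke the same smooth-convex co-coercivity bound (the paper's Lemma \ref{ex_lem3}) and the same stale-gradient splitting with Lipschitzness and Cauchy--Schwarz over at most $\tau$ delayed updates. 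The only step you leave implicit is the aggregation $\sum_{i\in I_t}\mathbb{E}\|v_i^t\|^2\geq\mathbb{E}\|v^t\|^2$ needed to pass from per-sample terms to the $-\eta\lambda(1-\eta\lambda n)\mathbb{E}\|v^t\|^2$ term, which the paper states explicitly.
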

\begin{theorem}
	\label{them_convex}
	Suppose losses $\phi_i$ are convex and $\nabla \phi_i$ are Lipschitz continuous.  Let $w^*$ be the optimal solution to $P(w)$, and  $\alpha_i^*=-\nabla \phi_i(w^*)$. Define $C_t = \frac{1}{2\lambda L} A_t+ B_t$. We can prove that as long as:
	\begin{eqnarray}
		\eta \leq \frac{1}{4HL\tau^2 + \lambda n + 2L }
	\end{eqnarray}
	the following inequality holds:
	\begin{eqnarray}
		\label{theom_iq1}
		\mathbb{E}[C_T] \leq (1-\eta \lambda H) \mathbb{E}[C_0]
	\end{eqnarray}
\end{theorem}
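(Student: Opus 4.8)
The plan is to combine the two inequalities of Lemma \ref{lem2} using exactly the weights that define $C_t = \frac{1}{2\lambda L}A_t + B_t$, extract a clean contraction factor, and then show the leftover error terms are nonpositive under the stated stepsize. First I would write $\mathbb{E}[C_{t+1}-C_t] = \frac{1}{2\lambda L}\mathbb{E}[A_{t+1}-A_t] + \mathbb{E}[B_{t+1}-B_t]$ and substitute (\ref{ineq_a}) and (\ref{ineq_b}). Multiplying (\ref{ineq_a}) by $\frac{1}{2\lambda L}$ turns its first two terms into $-\frac{\eta H}{2L}\mathbb{E}\|\alpha_i^t-\alpha_i^*\|^2 - \eta\lambda H\,\mathbb{E}\|w^t-w^*\|^2$, which is precisely $-\eta\lambda H\bigl(\frac{1}{2\lambda L}A_t + B_t\bigr) = -\eta\lambda H\,C_t$. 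This recombination into the geometric factor is the whole reason for the weight $\frac{1}{2\lambda L}$, and it is the structural core of the proof.

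Next I would verify that the function-value terms cancel. The scaled contribution $\frac{1}{2\lambda L}\cdot 4\eta\lambda HL\,(P(x^t)-P(w^*)) = 2\eta H\,(P(x^t)-P(w^*))$ from (\ref{ineq_a}) should be annihilated by the term $-2\eta(P(w^{d(t)})-P(w^*))$ of (\ref{ineq_b}) once the batch factor $H$ is carried consistently through both lines and $x^t$ is read as the delayed iterate $w^{d(t)}$. After this cancellation the only survivors are the $\|v\|^2$-type terms, namely $-\frac{\eta(1-\eta\lambda n)}{2L}\mathbb{E}\|v^t\|^2$ and $\eta^2\mathbb{E}\|v^t\|^2$, the explicit delay sum $\tau HL\eta^3\sum_{j=d(t)}^{t-1}\mathbb{E}\|v^j\|^2$, and the asynchrony cross-term $-2\eta H\langle w^t-w^{d(t)},\nabla P(w^{d(t)})\rangle$.

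Then I would control the asynchrony. Using the server recursion $w^t-w^{d(t)} = -\eta\sum_{j=d(t)}^{t-1}v^j$, the cross-term becomes $2\eta^2 H\langle\sum_{j=d(t)}^{t-1}v^j,\nabla P(w^{d(t)})\rangle$; Young's inequality combined with $\|\sum_{j=d(t)}^{t-1}v^j\|^2 \le \tau\sum_{j=d(t)}^{t-1}\|v^j\|^2$ (Cauchy--Schwarz over at most $\tau$ stale steps, by Assumption \ref{time_delay}) and the bound $\|\nabla P(w^{d(t)})\|^2 = \frac{1}{H^2}\|\mathbb{E}[v^t]\|^2 \le \frac{1}{H^2}\mathbb{E}\|v^t\|^2$ reduces it to multiples of $\mathbb{E}\|v^t\|^2$ and of delay sums $\sum_{j=d(t)}^{t-1}\mathbb{E}\|v^j\|^2$. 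I would then sum the per-step bound over the inner loop $t=0,\dots,T-1$; since $d(t)\ge t-\tau$, each $\mathbb{E}\|v^j\|^2$ appears in at most $\tau$ delay sums, so $\sum_t\sum_{j=d(t)}^{t-1}\mathbb{E}\|v^j\|^2 \le \tau\sum_t\mathbb{E}\|v^t\|^2$. This second factor of $\tau$, on top of the Cauchy--Schwarz one, is what produces the $\tau^2$ dependence. The aggregate coefficient on $\sum_t\mathbb{E}\|v^t\|^2$ is then (after multiplying through by $2L$) of the form $-1 + \eta\lambda n + 2L\eta + c\,HL^2\tau^2\eta^2$; bounding the last, quadratic piece by a linear one via $L\eta\le 1$ converts $HL^2\tau^2\eta^2$ into $HL\tau^2\eta$, and requiring the resulting linear expression to be nonpositive yields exactly $\eta(\lambda n + 2L + 4HL\tau^2)\le 1$, i.e.\ the stated bound. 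The hard part will be precisely this bookkeeping: tracking the two independent factors of $\tau$, choosing the Young weights so every stray $\mathbb{E}\|v^t\|^2$ is absorbed by the negative budget $-\frac{\eta(1-\eta\lambda n)}{2L}\mathbb{E}\|v^t\|^2$, and performing the $L\eta\le 1$ linearization that turns the $\tau^2$ term into the denominator summand $4HL\tau^2$.

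Finally, with all error terms nonpositive the telescoped inequality reads $\mathbb{E}[C_T]-\mathbb{E}[C_0] \le -\eta\lambda H\sum_{t=0}^{T-1}\mathbb{E}[C_t]$. Because every $C_t = \frac{1}{2\lambda L}A_t + B_t \ge 0$, the sum dominates its first term, $\sum_{t=0}^{T-1}\mathbb{E}[C_t]\ge \mathbb{E}[C_0]$, so $\mathbb{E}[C_T]-\mathbb{E}[C_0] \le -\eta\lambda H\,\mathbb{E}[C_0]$; rearranging gives (\ref{theom_iq1}). The remaining steps---the substitution, the convexity inequality $\langle\nabla P(w^{d(t)}),w^{d(t)}-w^*\rangle \ge P(w^{d(t)})-P(w^*)$ already used to obtain Lemma \ref{lem2}, and the telescoping---are routine.
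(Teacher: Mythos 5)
Your high-level skeleton coincides with the paper's proof: the weight $\frac{1}{2\lambda L}$ is chosen exactly so that the first two terms of (\ref{ineq_a}) recombine into $-\eta\lambda H\,C_t$, the delay sums are counted at most $\tau$ times each when summing over $t=0,\dots,T-1$, the cubic-in-$\eta$ terms are linearized via $\eta L\leq 1$, and the conclusion follows by telescoping together with $C_t\geq 0$. The gap is in your middle step. The quantity $P(x^t)$ in (\ref{ineq_a}) is $P(w^t)$, evaluated at the \emph{current} iterate, not at the delayed one: in the appendix the paper bounds $\mathbb{E}\|\nabla\phi_i(w^{d(t)})-\nabla\phi_i(w^*)\|^2 \leq 2\mathbb{E}\|\nabla\phi_i(w^{d(t)})-\nabla\phi_i(w^t)\|^2 + 2\mathbb{E}\|\nabla\phi_i(w^t)-\nabla\phi_i(w^*)\|^2$ and applies Lemma \ref{ex_lem3} at $w^t$; that transition is precisely why (\ref{ineq_a}) simultaneously contains the delay sum and the distance $\mathbb{E}\|w^t-w^*\|^2$ at the current iterate. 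Consequently the function-value terms do \emph{not} cancel. What survives is the combination $2\eta H\bigl(P(w^t)-P(w^{d(t)})-\langle w^t-w^{d(t)},\nabla P(w^{d(t)})\rangle\bigr)$, a Bregman divergence of $P$, which $L$-smoothness bounds by $\eta HL\,\mathbb{E}\|w^t-w^{d(t)}\|^2\leq \eta^3 HL\tau\sum_{j=d(t)}^{t-1}\mathbb{E}\|v^j\|^2$. Added to the delay term inherited from (\ref{ineq_a}), this gives $2\tau HL\eta^3\sum_{j=d(t)}^{t-1}\mathbb{E}\|v^j\|^2$, hence $2H\tau^2\eta^2\sum_t\mathbb{E}\|v^t\|^2$ after summation, and hence exactly the summand $4HL\tau^2$ in the denominator of the stepsize bound. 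No Young's inequality appears anywhere in the paper's argument.

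Your substitute mechanism --- exact cancellation under the reading $x^t=w^{d(t)}$, followed by Young's inequality on the orphaned cross term --- cannot reproduce the stated threshold. With Young weight $a$, Cauchy--Schwarz over the stale steps, and $\|\nabla P(w^{d(t)})\|^2\leq\frac{1}{H^2}\mathbb{E}\|v^t\|^2$, the cross term contributes a coefficient $\eta^2 Ha\tau^2 + \frac{\eta^2}{aH}$ on $\sum_t\mathbb{E}\|v^t\|^2$, on top of the $H\tau^2\eta^2$ inherited from (\ref{ineq_a}). Matching the paper's total budget $2H\tau^2\eta^2$ then requires $a\tau^2+\frac{1}{aH^2}\leq \tau^2$, and the left-hand side is at least $\frac{2\tau}{H}$ for every $a>0$, so this is possible only when $H\tau\geq 2$; for instance with $H=1$, $\tau=1$ no choice of $a$ works, and your denominator necessarily carries an extra positive summand (of order $\frac{L}{aH}$). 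You would therefore establish the contraction only for a strictly smaller stepsize range, not for all $\eta\leq\frac{1}{4HL\tau^2+\lambda n+2L}$ as Theorem \ref{them_convex} asserts. The repair is to keep $P(w^t)$ where Lemma \ref{lem2} puts it and bound the resulting Bregman combination by the smoothness of $P$, as above, instead of cancelling and invoking Young's inequality.
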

\begin{proof}\footnote{We provide the proof sketch here, please check the supplementary material for details.}
	Substituting $A_{t+1}$ and $B_{t+1}$ according to Lemma \ref{lem2}, the following inequality holds that:
	\begin{eqnarray}
		\mathbb{E}[C_{t+1}] &=&  \frac{1}{2\lambda L} A_{t+1} + B_{t+1} \nonumber \\
		&\leq & (1-\eta \lambda H) \mathbb{E}[C_t]  +   2\tau HL\eta^3   \sum\limits_{j=d(t)}^{t-1} \mathbb{E}\|v^j\|^2 +\left(  \frac{\eta^2 \lambda n}{2L}  + \eta^2 - \frac{\eta}{2L}   \right) \mathbb{E}\|v^t\|^2
	\end{eqnarray}
	Adding the above inequality from $t=0$ to $t=T-1$, we have:
	\begin{eqnarray}
			\sum\limits_{t=0}^{T-1} \mathbb{E} [C_{t+1}] &	\leq& \sum\limits_{t=0}^{T-1} (1-\eta \lambda H)\mathbb{E}[C_t]+ \left(2H\tau^2 \eta^2 + \frac{\eta^2 \lambda n}{2L}  + \eta^2 - \frac{\eta}{2L}   \right)  	\sum\limits_{t=0}^{T-1} \mathbb{E}\|v^t\|^2	
	\end{eqnarray}
	where the inequality follows from Assumption \ref{time_delay}  and $\eta L \leq 1$.
	If $2H\eta^2\tau^2 + \frac{\eta^2 \lambda n}{2L}  + \eta^2  - \frac{\eta}{2L}   \leq 0$, such that:
	\begin{eqnarray}
		\eta \leq \frac{1}{4HL\tau^2 + \lambda n + 2L },
	\end{eqnarray}
	we have the following inequality:
	\begin{eqnarray}
		\sum\limits_{t=0}^{T-1}	\mathbb{E} [C_{t+1}] &\leq& \sum\limits_{t=0}^{T-1} (1-\eta \lambda H)  \mathbb{E}[C_t] \nonumber \\
		&\leq & \sum\limits_{t=1}^{T-1}  \mathbb{E}[C_t]  + (1-\eta \lambda H)  C_0
	\end{eqnarray}
	Because $C_t \geq 0$, then we complete the proof that
	$\mathbb{E}[C_T] \leq (1- \eta \lambda H) \mathbb{E} [C_0].$   \QEDB \\
\end{proof}

Because $\nabla P(w)$ is  Lipschitz continuous, we know that:
\begin{eqnarray}
	P(w^t) - P(w^*) \leq \frac{L+\lambda}{2} \|w^t - w^*\|^2 \leq \frac{L+\lambda}{2} C_t
\end{eqnarray}
\begin{theorem}
	\label{them_convex_2}
	We consider the outer iteration $s$, and write $C^{t}$ as $C^{s,t}$. According to Algorithm \ref{alg2}, we know $C^{s+1, 0} = C^{s, T}$. Following Theorem \ref{them_convex} and applying (\ref{theom_iq1})  for $S$ iterations, it is satisfied that:
	\begin{eqnarray}
		\mathbb{E}[C_{S, 0}] \leq (1- \eta \lambda H)^S \mathbb{E}[C_{0, 0}]
	\end{eqnarray}
	In particular, to achieve $\mathbb{E} [P(w^{S, 0})  - P(w^*)] \leq \varepsilon$, it suffices to set
	$\eta = \frac{1}{4HL\tau^2 + \lambda n + 2L }$ and
	\begin{eqnarray}
		\label{con_s}
		S \geq  O\left( \left( \frac{L}{\lambda} \left(\tau^2 + \frac{1}{H} \right)  + \frac{n}{H}  \right)  \log \left(\frac{1}{\varepsilon}\right) \right)
	\end{eqnarray}
\end{theorem}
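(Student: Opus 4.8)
The plan is to turn the outer-loop statement into a clean geometric recursion and then invert the contraction factor to read off the iteration count; the substantive work is already contained in Theorem \ref{them_convex}, so what remains is a telescoping induction across the $S$ outer iterations plus routine bookkeeping. Fixing the outer index $s$ and applying Theorem \ref{them_convex} to the inner loop $t=0,\dots,T-1$, I get $\mathbb{E}[C^{s,T}] \leq (1-\eta\lambda H)\,\mathbb{E}[C^{s,0}]$. The server update $w^{s+1,0}=w^{s,T}$ together with the fact that each worker's local dual block is unchanged across the broadcast gives $C^{s+1,0}=C^{s,T}$, so the per-epoch bound becomes $\mathbb{E}[C^{s+1,0}] \leq (1-\eta\lambda H)\,\mathbb{E}[C^{s,0}]$. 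To chain this correctly I would read Theorem \ref{them_convex} as the conditional contraction $\mathbb{E}[C^{s,T}\mid \mathcal{F}^{s,0}] \leq (1-\eta\lambda H)\,C^{s,0}$ given the state $\mathcal{F}^{s,0}$ at the start of epoch $s$, and take total expectations so the factors multiply. A straightforward induction on $s$ then yields $\mathbb{E}[C^{s,0}] \leq (1-\eta\lambda H)^{s}\,\mathbb{E}[C^{0,0}]$, which at $s=S$ is the first display of the theorem.

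Next I would convert the bound on $C^{S,0}$ into a bound on the primal suboptimality. Since $A_t\geq 0$, the definition $C_t=\frac{1}{2\lambda L}A_t+B_t$ gives $B_t\leq C_t$, and the $(L+\lambda)$-smoothness of $P$ with $\nabla P(w^*)=0$ gives, after taking expectations, $\mathbb{E}[P(w^{S,0})-P(w^*)] \leq \frac{L+\lambda}{2}\,\mathbb{E}\|w^{S,0}-w^*\|^2 = \frac{L+\lambda}{2}B_{S,0} \leq \frac{L+\lambda}{2}C_{S,0}$, which is exactly the inequality displayed just before the theorem. Combining with the geometric bound from the previous paragraph yields $\mathbb{E}[P(w^{S,0})-P(w^*)] \leq \frac{L+\lambda}{2}(1-\eta\lambda H)^{S}\,\mathbb{E}[C^{0,0}]$.

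Finally I would solve for $S$. Using $1-\eta\lambda H \leq e^{-\eta\lambda H}$, it suffices to force $\frac{L+\lambda}{2}\mathbb{E}[C^{0,0}]\,e^{-\eta\lambda H S}\leq \varepsilon$, that is $S \geq \frac{1}{\eta\lambda H}\log\!\big(\frac{(L+\lambda)\mathbb{E}[C^{0,0}]}{2\varepsilon}\big)$. Plugging in $\eta=\frac{1}{4HL\tau^2+\lambda n+2L}$ gives $\frac{1}{\eta\lambda H}=\frac{4L\tau^2}{\lambda}+\frac{2L}{\lambda H}+\frac{n}{H}=\frac{L}{\lambda}\big(4\tau^2+\frac{2}{H}\big)+\frac{n}{H}$, which is $O\!\big(\frac{L}{\lambda}(\tau^2+\frac{1}{H})+\frac{n}{H}\big)$; absorbing the fixed problem constants into the logarithm produces the claimed $\log(1/\varepsilon)$ factor and the stated bound on $S$.

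I do not anticipate a serious obstacle here, since Theorem \ref{them_convex} already supplies the essential one-epoch contraction. The two points that genuinely need care are the measure-theoretic justification for chaining the conditional expectations across epochs, so that the \emph{random} initial condition $C^{s,0}$ of each epoch does not break the induction, and the arithmetic of expanding $1/(\eta\lambda H)$ so that the $\tau^2$ term, the $1/H$ terms, and the $n/H$ term land exactly in the stated big-$O$ form.
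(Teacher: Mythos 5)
Your proposal is correct and takes essentially the same route as the paper: chain the one-epoch contraction of Theorem \ref{them_convex} across outer iterations using $C^{s+1,0}=C^{s,T}$, convert to primal suboptimality via the smoothness inequality $P(w)-P(w^*)\leq \frac{L+\lambda}{2}C_t$ stated just before the theorem, and expand $\frac{1}{\eta\lambda H}$ with the prescribed $\eta$ to obtain the stated bound on $S$. Your extra care about chaining conditional expectations and the explicit arithmetic for $1/(\eta\lambda H)$ simply make explicit what the paper leaves implicit in the theorem statement itself.
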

From Theorem \ref{them_convex} and  \ref{them_convex_2}, we know that our Dis-dfSDCA admits linear convergence if losses $\phi_i$ are convex. According to Theorem \ref{them_convex_2}, we observe that $\tau$ affects the speed of our convergence, if $\tau \rightarrow \infty$, it may lead our algorithm to diverge. Therefore, it is important to keep $\tau$ within a reasonable bound. In our algorithm, $\tau$ is relevant to the number of workers and less than $T$. When we let $H=1$ and $\tau=0$, $S$ is relevant to $O(\frac{L}{\lambda}+n)$. It is  compatible with the convergence analysis of sequential dfSDCA in \cite{shalev2015sdca}.

\subsection{Non-convex Case}
In this section,  we assume that the losses $\phi_i$ are non-convex, while the sum-of-non-convex objectives $f$ is convex. We also prove that Dis-dfSDCA admits linear convergence rate for this case. Firstly, we get the following Lemma \ref{non_lem2}.
\begin{lemma}
	\label{non_lem2}
	Let $w^*$ be optimal solution to $P(w)$, and  $\alpha_i^*=-\nabla \phi_i(w^*)$.
	Following the definition of  $A_t$ and $B_t$ in Lemma \ref{lem2}, we prove that $A_{t+1}$ and $B_{t+1}$ are upper bounded:
	\begin{eqnarray}
		\label{non_ineq_a}
			\mathbb{E}[A_{t+1} - A_{t} ] 
		&\leq&  - \eta \lambda H  \mathbb{E}\|\alpha_i^{t}- \alpha_i^*\|^2  + 2\eta \lambda H L^2\mathbb{E}\|w^t - w^*\|^2  \nonumber \\
		&&- \eta \lambda(1-\eta \lambda n) \mathbb{E}\|v^t\|^2  + 2\lambda \tau HL^2 \eta^3 \sum\limits_{j=d(t)}^{t-1}\mathbb{E}\|v^j\|^2 
	\end{eqnarray}
	\begin{eqnarray}
		\label{non_ineq_b}
			\mathbb{E} [B_{t+1} - B_{t}] 
		& \leq & - \frac{3\eta \lambda H}{4} \mathbb{E}\|w^t - w^* \|^2  + \eta^2 \mathbb{E} \|v^t\|^2
		 \frac{2H \tau H^2 (L+\lambda)^2 \eta^3}{\lambda} \sum\limits_{j=d(t)}^{t-1}\mathbb{E}\|v^j\|^2
	\end{eqnarray}
\end{lemma}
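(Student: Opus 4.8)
The plan is to establish the two bounds (\ref{non_ineq_a}) and (\ref{non_ineq_b}) separately, reusing the algebraic skeleton of Lemma \ref{lem2} but replacing every place where the convex proof invoked convexity of the individual $\phi_i$ by a tool that survives when only $f$ (and hence $P$) is convex. Throughout I would use the decomposition $v_i^t = (\alpha_i^t - \alpha_i^*) + (\nabla \phi_i(w^{d(t)}) - \nabla \phi_i(w^*))$, which follows from $\alpha_i^* = -\nabla \phi_i(w^*)$ and from $\alpha_i^{d(t)} = \alpha_i^t$ (no local update occurs between $d(t)$ and $t$), together with the delay identity $w^t - w^{d(t)} = -\eta \sum_{j=d(t)}^{t-1} v^j$ and the delay bound $d(t) \geq t - \tau$ of Assumption \ref{time_delay}.

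For the dual bound (\ref{non_ineq_a}), I would start from $\alpha_i^{t+1} = \alpha_i^t - \eta \lambda n\, v_i^t$ for each sampled $i \in I_t$, expand $\|\alpha_i^{t+1} - \alpha_i^*\|^2$, sum over the $H$ sampled coordinates, and take expectation over $I_t$ using $\mathbb{E}[v_i^t] = \nabla P(w^{d(t)})$. The linear term produces $-2\eta\lambda n \langle \alpha_i^t - \alpha_i^*,\, v_i^t\rangle$; substituting the decomposition gives a contraction term $-2\eta\lambda n\|\alpha_i^t - \alpha_i^*\|^2$ plus a cross term $-2\eta\lambda n\langle \alpha_i^t - \alpha_i^*,\, \nabla\phi_i(w^{d(t)}) - \nabla\phi_i(w^*)\rangle$. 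In the convex proof this cross term was converted via convexity of $\phi_i$ into a primal-suboptimality quantity; here, since $\phi_i$ may be non-convex, I would instead apply Young's inequality $2\langle a,g\rangle \leq \beta\|a\|^2 + \beta^{-1}\|g\|^2$ and control $\|g\| = \|\nabla\phi_i(w^{d(t)}) - \nabla\phi_i(w^*)\| \leq L\|w^{d(t)} - w^*\|$ by the $L$-Lipschitz continuity of $\nabla\phi_i$. Choosing $\beta$ so the $\|\alpha_i^t - \alpha_i^*\|^2$ coefficient collapses to exactly $-\eta\lambda H$ leaves a term proportional to $L^2\|w^{d(t)} - w^*\|^2$, which becomes the $+2\eta\lambda H L^2\|w^t - w^*\|^2$ summand after the delay conversion described below, while the quadratic term $\eta^2\lambda^2 n^2\|v_i^t\|^2$ combines (after the sampling expectation) with part of the linear term to yield the $-\eta\lambda(1-\eta\lambda n)\|v^t\|^2$ contribution.

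For the primal bound (\ref{non_ineq_b}), I would expand $\|w^{t+1} - w^*\|^2$ from $w^{t+1} = w^t - \eta v^t$ and take expectation to obtain $-2\eta H\langle w^t - w^*,\, \nabla P(w^{d(t)})\rangle + \eta^2\mathbb{E}\|v^t\|^2$. The decisive non-convex step is that convexity of $f$ makes $P$ $\lambda$-strongly convex, so I would write $\nabla P(w^{d(t)}) = \nabla P(w^t) + (\nabla P(w^{d(t)}) - \nabla P(w^t))$ and use $\langle w^t - w^*,\, \nabla P(w^t)\rangle \geq \lambda\|w^t - w^*\|^2$ (recall $\nabla P(w^*) = 0$) to extract $-2\eta\lambda H\|w^t - w^*\|^2$. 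The residual inner product is then bounded by Cauchy--Schwarz and the $(L+\lambda)$-smoothness of $P$, and split by Young's inequality so as to return $\tfrac{5}{4}\eta\lambda H\|w^t - w^*\|^2$ (giving the stated net $-\tfrac{3}{4}\eta\lambda H\|w^t - w^*\|^2$) plus a multiple of $\|w^t - w^{d(t)}\|^2$.

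Both bounds are closed by the delay estimate $\|w^t - w^{d(t)}\|^2 = \eta^2\|\sum_{j=d(t)}^{t-1} v^j\|^2 \leq \eta^2\tau\sum_{j=d(t)}^{t-1}\|v^j\|^2$, which uses Cauchy--Schwarz together with $t - d(t) \leq \tau$, and in the dual bound by the companion estimate $\|w^{d(t)} - w^*\|^2 \leq 2\|w^t - w^*\|^2 + 2\eta^2\tau\sum_{j=d(t)}^{t-1}\|v^j\|^2$; these generate the $\eta^3\sum_{j=d(t)}^{t-1}\|v^j\|^2$ tails. I expect the main obstacle to be precisely the cross-term handling: without convexity of each $\phi_i$ I can no longer turn the dual inner product into a primal-gap term, so I must lean on global strong convexity of $P$ for the primal bound and settle for a weaker Lipschitz/Young estimate (the positive $+2\eta\lambda H L^2\|w^t - w^*\|^2$) for the dual bound. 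Calibrating the Young parameters so the stated coefficients emerge, and keeping that positive $L^2$ term small enough that it is dominated when the two estimates are later combined in the Lyapunov function $C_t = c\,A_t + B_t$, is the delicate accounting on which the ensuing convergence theorem depends.
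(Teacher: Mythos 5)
Your plan for the primal bound (\ref{non_ineq_b}) is essentially the paper's own proof: split $\nabla P(w^{d(t)})$ at $\nabla P(w^t)$, use $\lambda$-strong convexity of $P$ to get $\langle w^t - w^*, \nabla P(w^t)\rangle \geq \lambda\|w^t-w^*\|^2$, bound the residual inner product by Young's inequality together with the $(L+\lambda)$-smoothness of $P$, and close with the delay estimate; only your Young parameter differs from the paper's choice $\gamma = \lambda/2$. The dual bound, however, has a genuine gap. You propose to expand $\|\alpha_i^t - \alpha_i^* - \eta\lambda n\, v_i^t\|^2$, decompose the cross term, and apply Young's inequality $-2\langle a, g\rangle \leq \beta'\|a\|^2 + (1/\beta')\|g\|^2$. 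With the calibration you describe ($\beta'=1$, so that the $\|a\|^2$ coefficient collapses to $-\eta\lambda H$ after the sampling expectation), the $\|g\|^2$ term does correctly yield $2\eta\lambda H L^2\mathbb{E}\|w^t-w^*\|^2$ plus a delay tail, but the quadratic remainder then contributes $+\eta^2\lambda^2 n\,\mathbb{E}\sum_{i\in I_t}\|v_i^t\|^2$, a \emph{positive} quantity. The claimed inequality (\ref{non_ineq_a}) instead contains $-\eta\lambda(1-\eta\lambda n)\mathbb{E}\|v^t\|^2$, which is negative for admissible step sizes, and there is no ``part of the linear term'' left over to supply it: Young's inequality consumed the entire linear term. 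Per sample, the discrepancy is exactly $\eta\lambda n\|v_i^t\|^2$, which is precisely what Young's inequality throws away relative to an exact expansion.

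This lost term is not cosmetic. In Theorem \ref{them_nonconvex}, the coefficient $-c_a\eta\lambda(1-\beta)$ inherited from (\ref{non_ineq_a}) is the \emph{only} negative multiplier of $\mathbb{E}\|v^t\|^2$ in the Lyapunov recursion, and the step-size condition is engineered precisely so that it absorbs $c_b\eta^2\mathbb{E}\|v^t\|^2$ and the asynchrony terms $\sum_{j=d(t)}^{t-1}\mathbb{E}\|v^j\|^2$; with your positive coefficient the recursion cannot be closed. The fix costs nothing: since $v_i^t = (\alpha_i^t - \alpha_i^*) + (\nabla\phi_i(w^{d(t)}) - \nabla\phi_i(w^*))$ holds \emph{exactly}, replace Young's inequality by the polarization identity $-2\langle a, g\rangle = \|a\|^2 + \|g\|^2 - \|a+g\|^2$, whose discarded term is exactly $\|v_i^t\|^2$ and must be kept. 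Equivalently --- and this is what the paper does, both here and in Lemma \ref{lem2} --- write the update as a convex combination, $\alpha_i^{t+1} - \alpha_i^* = (1-\beta)(\alpha_i^t - \alpha_i^*) + \beta(-\nabla\phi_i(w^{d(t)}) - \alpha_i^*)$ with $\beta = \eta\lambda n$, and use the identity $\|(1-\beta)u + \beta z\|^2 = (1-\beta)\|u\|^2 + \beta\|z\|^2 - \beta(1-\beta)\|u-z\|^2$, noting $u - z = v_i^t$; this produces the $-\beta(1-\beta)$ coefficient verbatim. Incidentally, your recollection of the convex proof is inaccurate: Lemma \ref{lem2} uses this same exact identity and applies convexity (via Lemma \ref{ex_lem3}) to the squared norm $\|\nabla\phi_i(w^{d(t)}) + \alpha_i^*\|^2$, not to a cross term; the non-convex proof simply swaps that one step for the Lipschitz bound $L^2\|w^{d(t)}-w^*\|^2$ --- which is the part of your dual argument that you did get right.
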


\begin{theorem}
	\label{them_nonconvex}
	Suppose $f$ is convex  and $\nabla \phi_i$ is Lipschitz continuous. Let $w^*$ be the optimal solution to $P(w)$, and  $\alpha_i^*=-\nabla \phi_i(w^*)$. Define $C_t = \frac{1}{4 L^2} A_t+ B_t$. We can prove that as long as:
	\begin{eqnarray}
		\eta \leq \frac{\lambda^2}{2HL\tau^2\lambda^2 + 8 HL\tau^2(L+\lambda)^2  + 4\lambda L^2 + n \lambda^3 }
	\end{eqnarray}
	the following inequality holds:
	\begin{eqnarray}
		\label{non_theom_iq1}
		\mathbb{E}[C_T] \leq (1-\eta \lambda H) \mathbb{E}[C_0]
	\end{eqnarray}
\end{theorem}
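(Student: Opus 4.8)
The plan is to mirror the argument used for the convex case in Theorem \ref{them_convex}, substituting the curvature estimates of Lemma \ref{non_lem2} for the duality-gap terms that were available under convexity. I would work with the Lyapunov potential $C_t = \frac{1}{4L^2}A_t + B_t$ and form the weighted sum $\frac{1}{4L^2}\times$(\ref{non_ineq_a})$\,+\,$(\ref{non_ineq_b}). The weight $\frac{1}{4L^2}$ is the crucial bookkeeping choice: it turns the $\alpha$-contraction $-\eta\lambda H\,\mathbb{E}\|\alpha_i^t-\alpha_i^*\|^2$ of (\ref{non_ineq_a}) into $-\eta\lambda H\cdot\frac{1}{4L^2}A_t$, and at the same time rescales the \emph{positive} curvature term $+2\eta\lambda HL^2\,\mathbb{E}\|w^t-w^*\|^2$ down to $+\frac{\eta\lambda H}{2}\mathbb{E}\|w^t-w^*\|^2$, a magnitude that the negative term $-\frac{3\eta\lambda H}{4}\mathbb{E}\|w^t-w^*\|^2$ from (\ref{non_ineq_b}) can absorb.

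Carrying out the combination, the two $\mathbb{E}\|w^t-w^*\|^2$ contributions collapse into a single net negative multiple of $B_t$, while the $\alpha$ term supplies the matching contraction on $A_t$; using $A_t=\mathbb{E}\|\alpha_i^t-\alpha_i^*\|^2$ and $B_t=\mathbb{E}\|w^t-w^*\|^2$ one reaches a per-step inequality of the form
\begin{eqnarray}
\mathbb{E}[C_{t+1}] \leq (1-\eta\lambda H)\,\mathbb{E}[C_t] + \left(\frac{\eta^2\lambda^2 n}{4L^2}+\eta^2-\frac{\eta\lambda}{4L^2}\right)\mathbb{E}\|v^t\|^2 + c\,\eta^3 \sum_{j=d(t)}^{t-1}\mathbb{E}\|v^j\|^2, \nonumber
\end{eqnarray}
where the constant $c=\frac{1}{4L^2}\cdot 2\lambda\tau HL^2 + \frac{2\tau H(L+\lambda)^2}{\lambda}$ gathers the two delay terms produced by the asynchronous updates. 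This is the exact non-convex analogue of the displayed bound in the proof of Theorem \ref{them_convex}; the one structural change is that the decay on $B_t$ now rests on the $\lambda$-strong convexity of $P$ (available because $f$ is convex) rather than on convexity of each $\phi_i$.

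Next I would sum this inequality from $t=0$ to $T-1$. Invoking Assumption \ref{time_delay} so that each delayed gradient is double-counted at most $\tau$ times, $\sum_{t=0}^{T-1}\sum_{j=d(t)}^{t-1}\mathbb{E}\|v^j\|^2 \leq \tau\sum_{t=0}^{T-1}\mathbb{E}\|v^t\|^2$, all $\|v^t\|^2$ contributions merge into one sum with coefficient
\begin{eqnarray}
\frac{\eta^2\lambda^2 n}{4L^2}+\eta^2-\frac{\eta\lambda}{4L^2} + \eta^3\tau^2\left(\frac{\lambda H}{2}+\frac{2H(L+\lambda)^2}{\lambda}\right). \nonumber
\end{eqnarray}
Requiring this to be nonpositive and using $\eta L\leq 1$ to demote the cubic term to a quadratic one solves to exactly the stated bound $\eta \leq \lambda^2/(2HL\tau^2\lambda^2 + 8HL\tau^2(L+\lambda)^2 + 4\lambda L^2 + n\lambda^3)$. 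With that sum discarded, telescoping together with $C_t\geq 0$ yields $\mathbb{E}[C_T]\leq(1-\eta\lambda H)\mathbb{E}[C_0]$, which is (\ref{non_theom_iq1}).

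The main obstacle is the sign reversal in (\ref{non_ineq_a}): because the $\phi_i$ are no longer convex, the bound on $A_{t+1}$ acquires a \emph{positive} multiple of $\mathbb{E}\|w^t-w^*\|^2$ in place of the favourable negative multiple that appeared in (\ref{ineq_a}). The whole proof therefore turns on whether the negative $w$-curvature that the convexity of the aggregate $f$ injects into (\ref{non_ineq_b}) is, after the $\frac{1}{4L^2}$ rescaling, strong enough to dominate it and still leave a contraction. Verifying this balance, and tracking the heavier step-size denominator that the two asynchronous delay terms generate, is the only place where the non-convex argument genuinely diverges from its convex counterpart.
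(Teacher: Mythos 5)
Your route is the same as the paper's: the potential $C_t=\frac{1}{4L^2}A_t+B_t$, the weighted combination $\frac{1}{4L^2}\times$(\ref{non_ineq_a}) plus (\ref{non_ineq_b}), merging the delay sums via Assumption \ref{time_delay} together with $\eta L\le 1$, deriving the step-size threshold, and telescoping with $C_t\ge 0$. Your step-size algebra is also exactly the paper's: requiring $\eta^2+\frac{n\eta^2\lambda^2}{4L^2}+\frac{\lambda H\tau^2\eta^2}{2L}+\frac{2H\tau^2(L+\lambda)^2\eta^2}{\lambda L}-\frac{\eta\lambda}{4L^2}\le 0$ does solve to the stated bound on $\eta$.

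There is, however, one arithmetic step that does not follow from the numbers you quote, and it sits exactly at the point you identify as the crux. Rescaling $+2\eta\lambda HL^2\,\mathbb{E}\|w^t-w^*\|^2$ by $\frac{1}{4L^2}$ gives $+\frac{\eta\lambda H}{2}B_t$, and the negative term you take from (\ref{non_ineq_b}) is $-\frac{3\eta\lambda H}{4}B_t$; their sum is $-\frac{\eta\lambda H}{4}B_t$, not $-\eta\lambda H\,B_t$. So the contractions do not ``match'': the $A$-block contracts at rate $\eta\lambda H$ while the $B$-block contracts only at rate $\frac{\eta\lambda H}{4}$, and the per-step inequality you can legitimately write is $\mathbb{E}[C_{t+1}]\le\left(1-\frac{\eta\lambda H}{4}\right)\mathbb{E}[C_t]+\cdots$, which after summing and telescoping yields only $\mathbb{E}[C_T]\le\left(1-\frac{\eta\lambda H}{4}\right)\mathbb{E}[C_0]$ --- still linear convergence, but weaker than (\ref{non_theom_iq1}). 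The mismatch is inherited from the paper rather than invented by you: the appendix proof of Lemma \ref{non_lem2} actually establishes the $B$-decrease with coefficient $-2\eta H\left(\lambda-\frac{\gamma}{2}\right)=-\frac{3\eta\lambda H}{2}$ for $\gamma=\frac{\lambda}{2}$ (inequality (\ref{them_iq_2}) multiplied by $2\eta$), i.e.\ twice what the lemma statement records, and the paper's proof of Theorem \ref{them_nonconvex} silently uses this stronger coefficient. With $-\frac{3\eta\lambda H}{2}$ in place of $-\frac{3\eta\lambda H}{4}$, the net coefficient on $B_t$ becomes $\frac{\eta\lambda H}{2}-\frac{3\eta\lambda H}{2}=-\eta\lambda H$, the two contractions do match, and the remainder of your argument goes through verbatim to give exactly (\ref{non_theom_iq1}). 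To close the gap, either rederive the $B$-bound from the lemma's proof so that you may use the factor $\frac{3}{2}$, or state the weaker $\left(1-\frac{\eta\lambda H}{4}\right)$ conclusion that the coefficients you cite actually support.
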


\begin{proof}
	Substituting $A_{t+1}$ and $B_{t+1}$ according to  Lemma \ref{non_lem2}, the following inequality holds that:
	\begin{eqnarray}
			\mathbb{E}[C_{t+1}] &=&  \frac{1}{4 L^2} A_{t+1} + B_{t+1} \nonumber \\
		&\leq & (1-\eta \lambda H) \mathbb{E}[C_t] +  \left(\frac{\lambda H\tau \eta^3}{2} + \frac{2H\tau (L+\lambda)^2 \eta^3}{\lambda} \right)   \sum\limits_{j=d(t)}^{t-1} \mathbb{E}\|v^j\|^2 \nonumber \\
		&&+\left(  \eta^2 + \frac{n \eta^2 \lambda^2}{4L^2} - \frac{\eta \lambda }{4L^2}   \right) \mathbb{E}\|v^t\|^2
	\end{eqnarray}
	Adding the above inequality from $t=0$ to $t=T-1$, we have:
	\begin{eqnarray}
		\sum\limits_{t=0}^{T-1} \mathbb{E} [C_{t+1}] &	\leq& \sum\limits_{t=0}^{T-1} (1-\eta \lambda H)\mathbb{E}[C_t] \nonumber\\
		&&+ \biggl(\eta^2 + \frac{n \eta^2 \lambda^2}{4L^2}  
		+	\frac{\lambda H\tau^2 \eta^2}{2L} + \frac{2H\tau^2 (L+\lambda)^2 \eta^2}{\lambda L}- \frac{\eta \lambda }{4L^2}     \biggr)  	\sum\limits_{t=0}^{T-1} \mathbb{E}\|v^t\|^2	
	\end{eqnarray}
	where the inequality follows from Assumption \ref{time_delay}  and $\eta L \leq 1$.
	If $\eta^2 + \frac{n \eta^2 \lambda^2}{4L^2}  +
	\frac{\lambda H\tau^2 \eta^2}{2L} + \frac{2H\tau^2 (L+\lambda)^2 \eta^2}{\lambda L}- \frac{\eta \lambda }{4L^2}     \leq 0$, such that:
	\begin{eqnarray}
		\eta \leq \frac{\lambda^2}{2HL\tau^2\lambda^2 + 8 HL\tau^2 (L+\lambda)^2 + 4\lambda L^2 + n \lambda^3 }
	\end{eqnarray}
	we have the following inequality:
	\begin{eqnarray}
		\sum\limits_{t=0}^{T-1}	\mathbb{E} [C_{t+1}] &\leq& \sum\limits_{t=0}^{T-1} (1-\eta \lambda H)  \mathbb{E}[C_t] \nonumber \\
		&\leq & \sum\limits_{t=1}^{T-1}  \mathbb{E}[C_t]  + (1-\eta \lambda H)  C_0
	\end{eqnarray}
	Because $C_t \geq 0$, then we complete the proof that
	$\mathbb{E}[C_T] \leq (1- \eta \lambda H) \mathbb{E} [C_0]$.\QEDB \\
\end{proof}

\begin{theorem}
	\label{them_nonconvex_2}
	We consider the outer iteration $s$, and write $C^{t}$ as $C^{s,t}$. According to Algorithm \ref{alg2}, we know $C^{s+1, 0} = C^{s, T}$. Following Theorem \ref{them_nonconvex} and applying (\ref{non_theom_iq1})  for $S$ iterations, it is satisfied that:
	\begin{eqnarray}
		\mathbb{E}[C_{S, 0}] \leq (1- \eta \lambda H)^S \mathbb{E}[C_{0, 0}]
	\end{eqnarray}
	In particular, to achieve $\mathbb{E} [P(w^{S, 0})  - P(w^*)] \leq \varepsilon$, it suffices to set
	$\eta = \frac{\lambda^2}{2HL\tau^2\lambda^2 + 8 HL\tau^2 (L+\lambda)^2 + 4\lambda L^2 + n \lambda^3 }$ and
	\small
	\begin{eqnarray}
		\label{non_s}
		S \geq  O\left( \left(   \frac{\left( \tau^2+ 1/H \right) L^2}{\lambda^2}  + \frac{\tau^2L^3}{\lambda^3} + \frac{n}{H}  \right)  \log \left(\frac{1}{\varepsilon}\right) \right)
	\end{eqnarray}
\end{theorem}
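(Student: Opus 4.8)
The plan is to lift the single-block contraction of Theorem \ref{them_nonconvex} to the full run of $S$ outer iterations, and then convert the resulting bound on $C$ into a primal suboptimality bound that I can invert to solve for $S$.

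First I would read the conclusion (\ref{non_theom_iq1}) of Theorem \ref{them_nonconvex} with the outer index made explicit: writing $C_0$ and $C_T$ as $C_{s,0}$ and $C_{s,T}$, it states $\mathbb{E}[C_{s,T}] \leq (1-\eta\lambda H)\,\mathbb{E}[C_{s,0}]$ for every outer block $s$. The structural observation is that the broadcast in Algorithm \ref{alg2} sets $w^{s+1,0}=w^{s,T}$, while the dual blocks $\alpha_{[k]}$ are stored persistently in the workers and are not reset; hence $A_{s+1,0}=A_{s,T}$, $B_{s+1,0}=B_{s,T}$, and therefore $C_{s+1,0}=C_{s,T}$. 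Chaining these two facts gives the one-step recursion $\mathbb{E}[C_{s+1,0}] \leq (1-\eta\lambda H)\,\mathbb{E}[C_{s,0}]$, and unrolling from $s=0$ to $s=S-1$ yields $\mathbb{E}[C_{S,0}] \leq (1-\eta\lambda H)^S\,\mathbb{E}[C_{0,0}]$. I would emphasize that the broadcast is precisely what re-synchronizes every worker at the start of each block, so that the delay bound of Assumption \ref{time_delay} (and hence all hypotheses of Theorem \ref{them_nonconvex}) holds uniformly in $s$ rather than accumulating across blocks.

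Next, to pass to primal suboptimality I would use the $(L+\lambda)$-smoothness of $P$ together with $\nabla P(w^*)=0$, giving $P(w^{S,0})-P(w^*) \leq \frac{L+\lambda}{2}\|w^{S,0}-w^*\|^2$. Since the non-convex Lyapunov function is $C_t=\frac{1}{4L^2}A_t+B_t$ with $A_t\geq 0$ and $B_t=\mathbb{E}\|w^t-w^*\|^2$, we have $\mathbb{E}\|w^{S,0}-w^*\|^2 \leq \mathbb{E}[C_{S,0}]$, so that $\mathbb{E}[P(w^{S,0})-P(w^*)] \leq \frac{L+\lambda}{2}(1-\eta\lambda H)^S\,\mathbb{E}[C_{0,0}]$.

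Finally I would invert this bound. Bounding $(1-\eta\lambda H)^S \leq \exp(-\eta\lambda H S)$ shows that accuracy $\varepsilon$ is attained once $S \geq \frac{1}{\eta\lambda H}\log\left(\frac{(L+\lambda)\mathbb{E}[C_{0,0}]}{2\varepsilon}\right)$, i.e. $S = \frac{1}{\eta\lambda H}\cdot O(\log(1/\varepsilon))$. Substituting the prescribed $\eta$ gives $\frac{1}{\eta\lambda H} = \frac{2L\tau^2}{\lambda} + \frac{8L\tau^2(L+\lambda)^2}{\lambda^3} + \frac{4L^2}{H\lambda^2} + \frac{n}{H}$. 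The only genuine work, and the step to be careful about, is the asymptotic accounting of the middle term: expanding $(L+\lambda)^2 = L^2+2L\lambda+\lambda^2$ splits it into $\frac{\tau^2 L^3}{\lambda^3}$, $\frac{\tau^2 L^2}{\lambda^2}$ and $\frac{\tau^2 L}{\lambda}$, and under the standard regime $L\geq\lambda$ (condition number $L/\lambda\geq 1$) the last of these, together with the leading $\frac{2L\tau^2}{\lambda}$ term, is dominated by $\frac{\tau^2 L^2}{\lambda^2}$. Collecting the surviving terms with $\frac{4L^2}{H\lambda^2}$ and $\frac{n}{H}$ then reproduces exactly $O\left(\frac{(\tau^2+1/H)L^2}{\lambda^2}+\frac{\tau^2 L^3}{\lambda^3}+\frac{n}{H}\right)$. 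Everything before this bookkeeping is a direct consequence of Theorem \ref{them_nonconvex} and smoothness, so the asymptotic simplification is the main obstacle.
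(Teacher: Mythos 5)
Your proposal is correct and follows essentially the same route as the paper: chain the per-block contraction of Theorem \ref{them_nonconvex} across outer iterations using $C_{s+1,0}=C_{s,T}$, convert $\mathbb{E}[C_{S,0}]$ to primal suboptimality via the $(L+\lambda)$-smoothness bound $P(w)-P(w^*)\leq \frac{L+\lambda}{2}\|w-w^*\|^2\leq \frac{L+\lambda}{2}C$, and invert the geometric decay after substituting the prescribed $\eta$. Your extra care in justifying $C_{s+1,0}=C_{s,T}$ (persistence of the dual blocks plus the broadcast) and in the big-$O$ bookkeeping under $L\geq\lambda$ only makes explicit what the paper leaves implicit.
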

From Theorem \ref{them_nonconvex} and  \ref{them_nonconvex_2}, we know that our Dis-dfSDCA admits linear convergence even if  losses $\phi_i$ are non-convex, as long as the sum-of-non-convex objectives is convex. Comparing Theorem \ref{them_convex_2} with \ref{them_nonconvex_2}, we can observe that our method needs more iterations to converge to the similar accuracy when $\phi_i$ are non-convex. It is reasonable because non-convex problem is known to be harder to be optimized than convex problem. When we let $H=1$ and $\tau=0$,  $S$ is relevant to $O(\frac{L^2}{\lambda^2}+n)$. It is also compatible with the convergence analysis of sequential dfSDCA in \cite{shalev2015sdca}.

\begin{figure}[h]
	\centering
	\begin{subfigure}[b]{0.31\textwidth}
		\centering
		\includegraphics[width=2.2in]{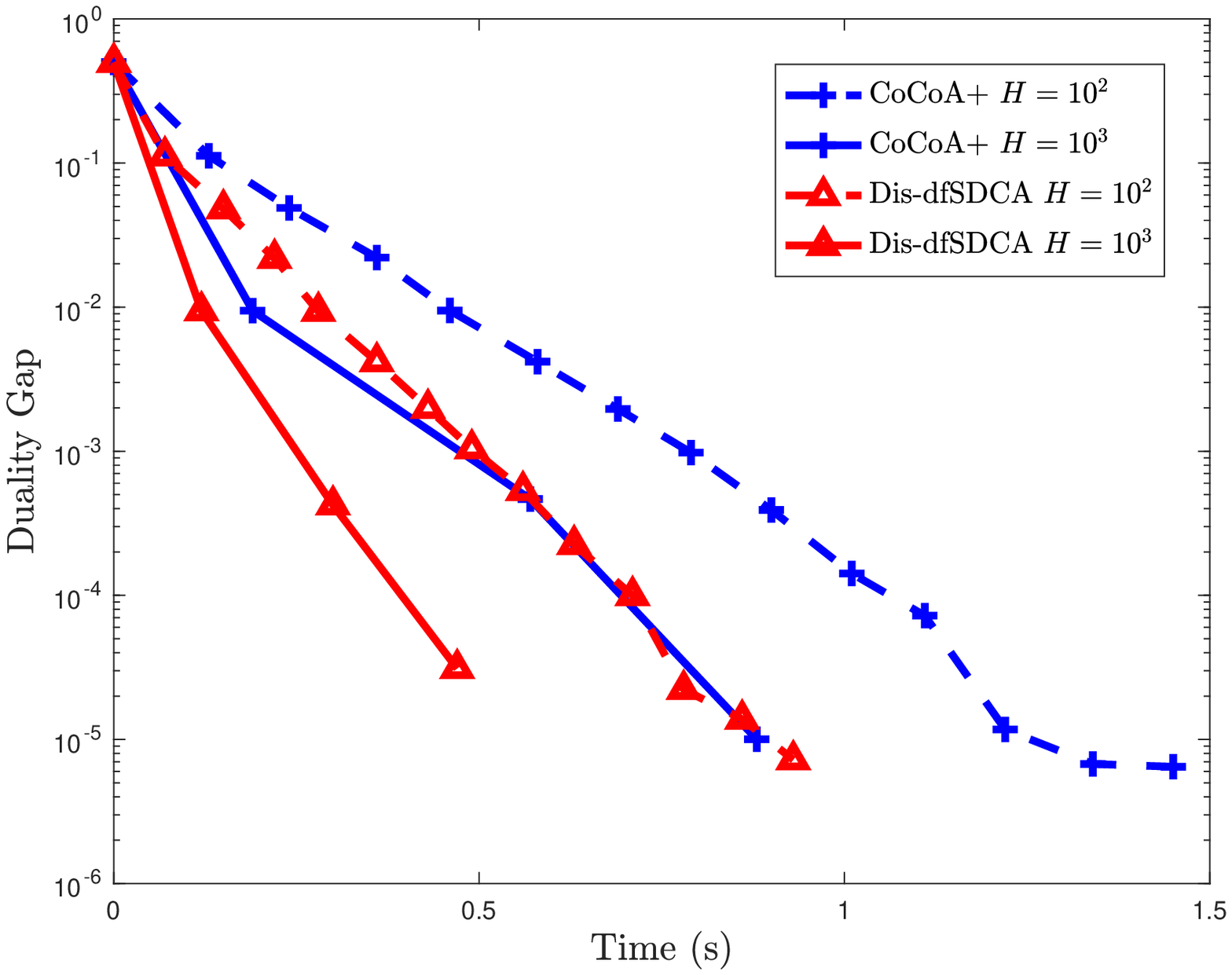}
	\end{subfigure}
	\begin{subfigure}[b]{0.31\textwidth}
		\centering
		\includegraphics[width=2.2in]{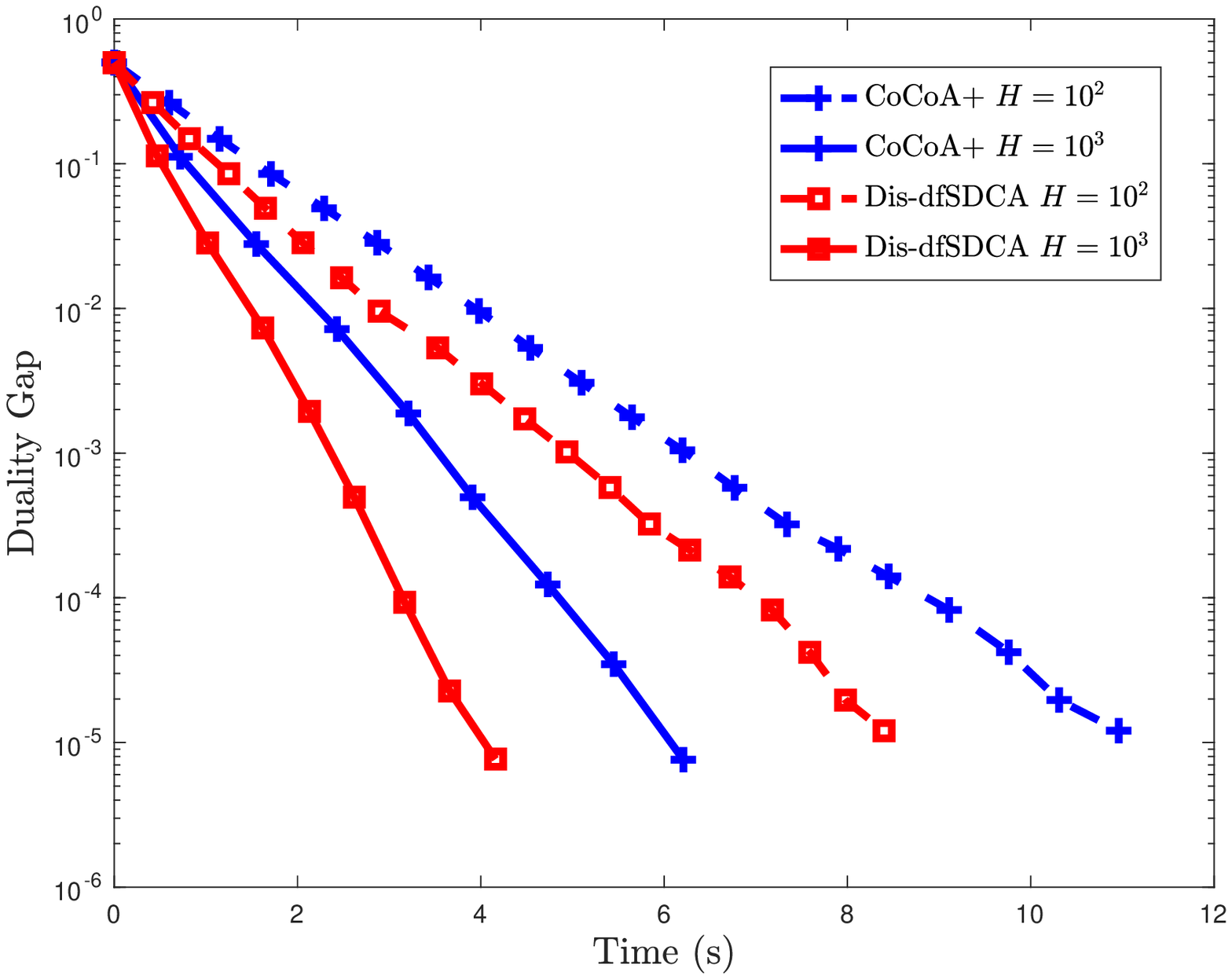}
	\end{subfigure}
	\begin{subfigure}[b]{0.31\textwidth}
		\centering
		\includegraphics[width=2.2in]{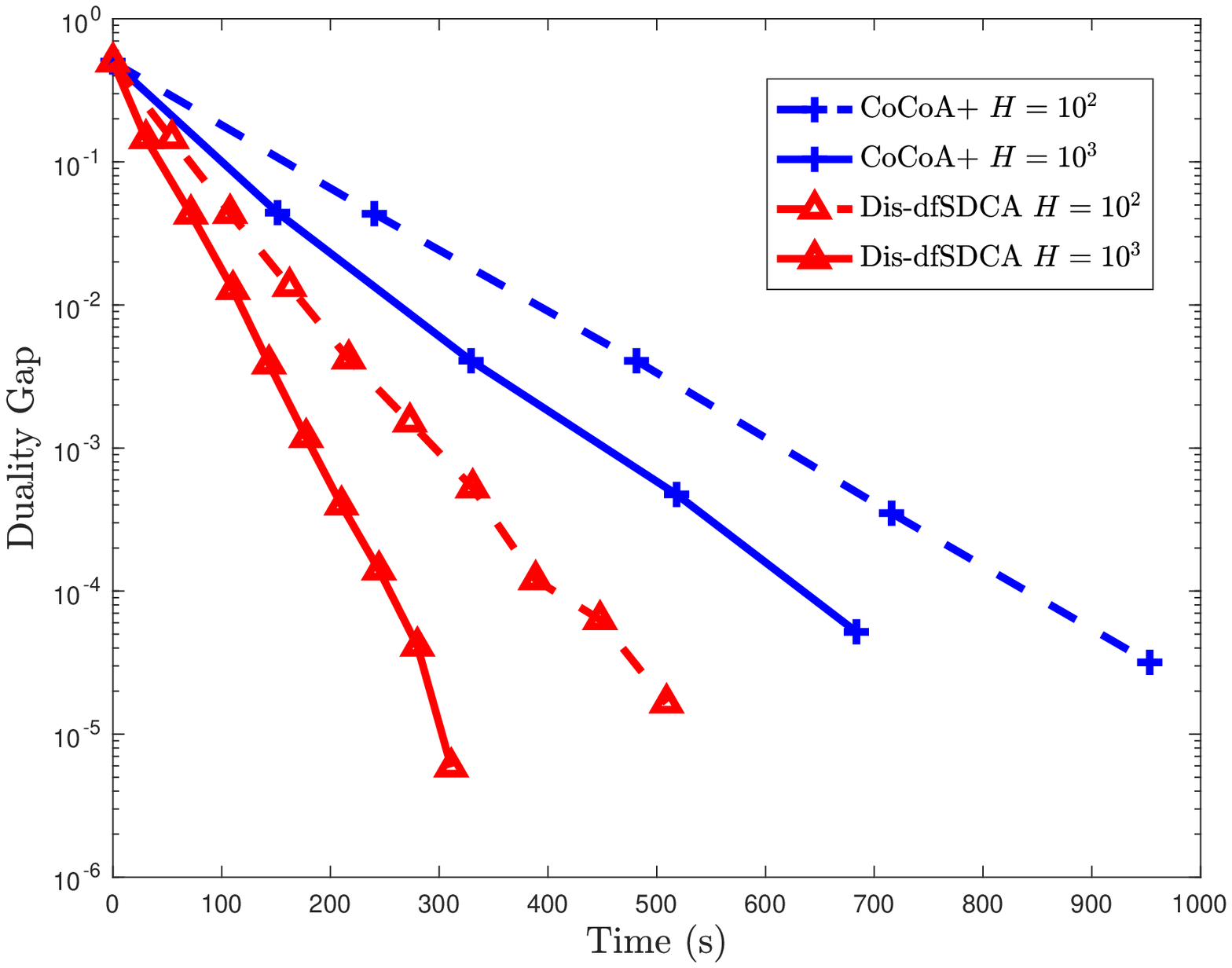}
	\end{subfigure}
	\begin{subfigure}[b]{0.31\textwidth}
		\centering
		\includegraphics[width=2.2in]{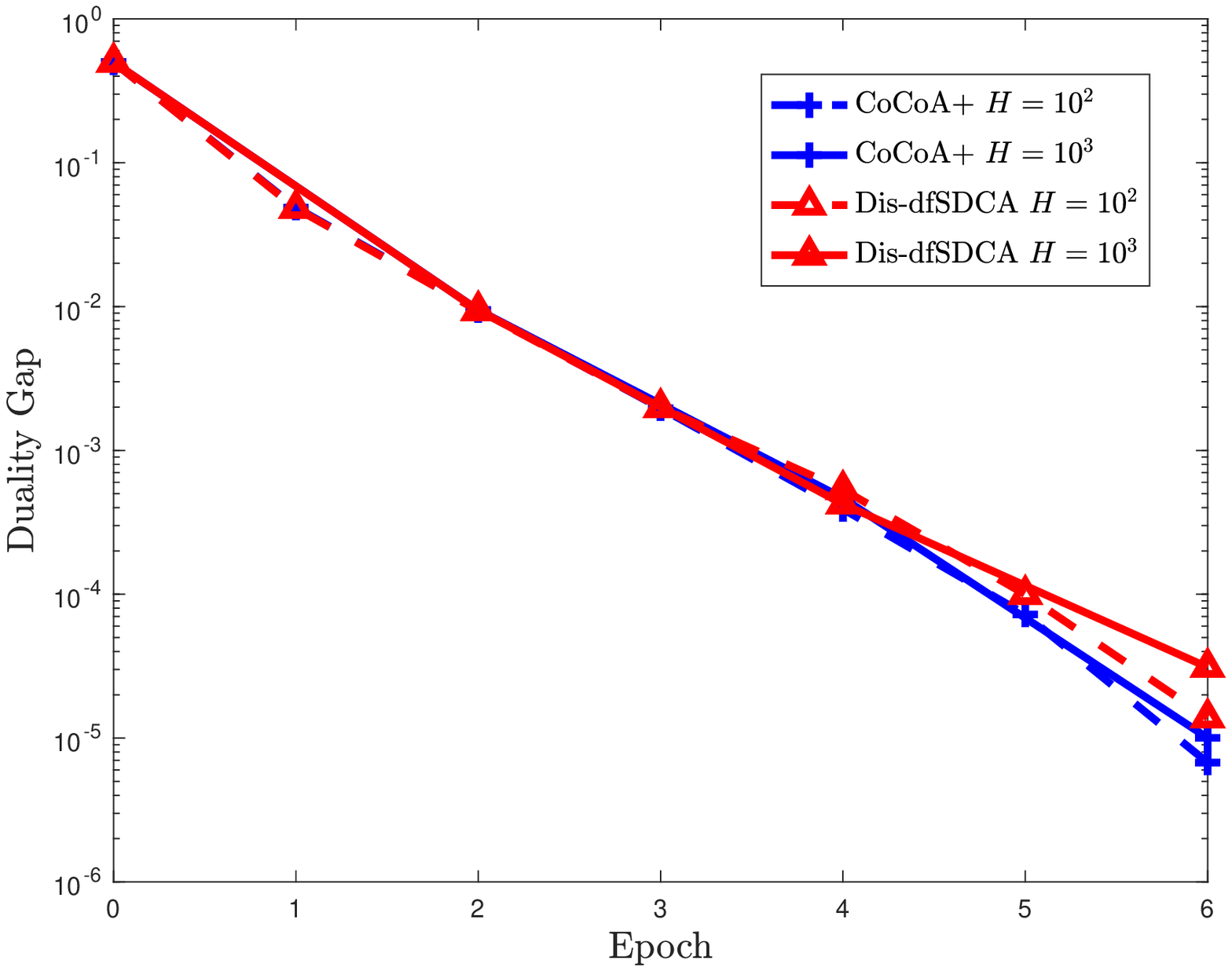}
		\caption{IJCNN1}
		\label{ijcnn1_gap_epoch}
	\end{subfigure}
	\begin{subfigure}[b]{0.31\textwidth}
		\centering
		\includegraphics[width=2.2in]{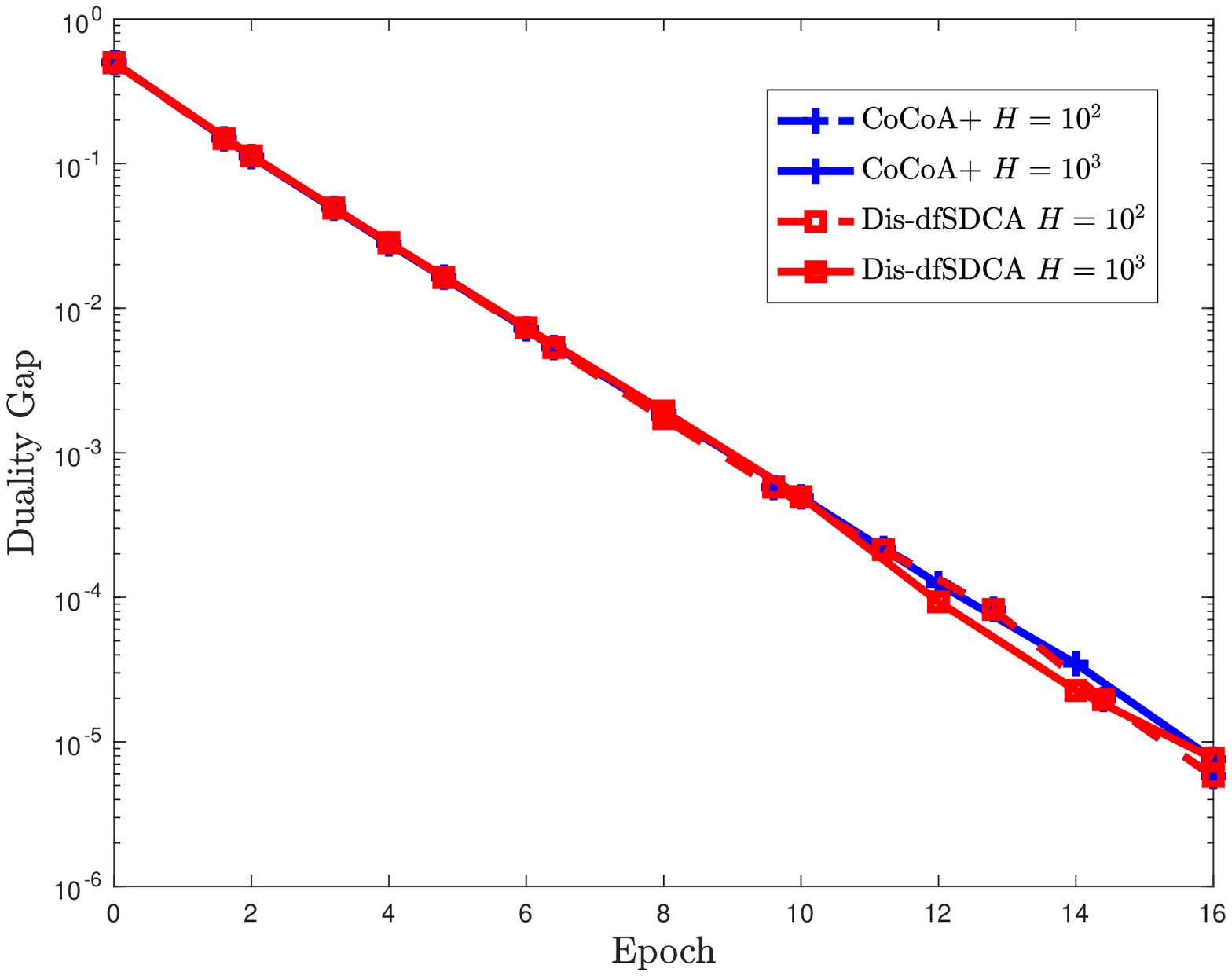}
		\caption{COVTYPE}
	\end{subfigure}
	\begin{subfigure}[b]{0.31\textwidth}
		\centering
		\includegraphics[width=2.2in]{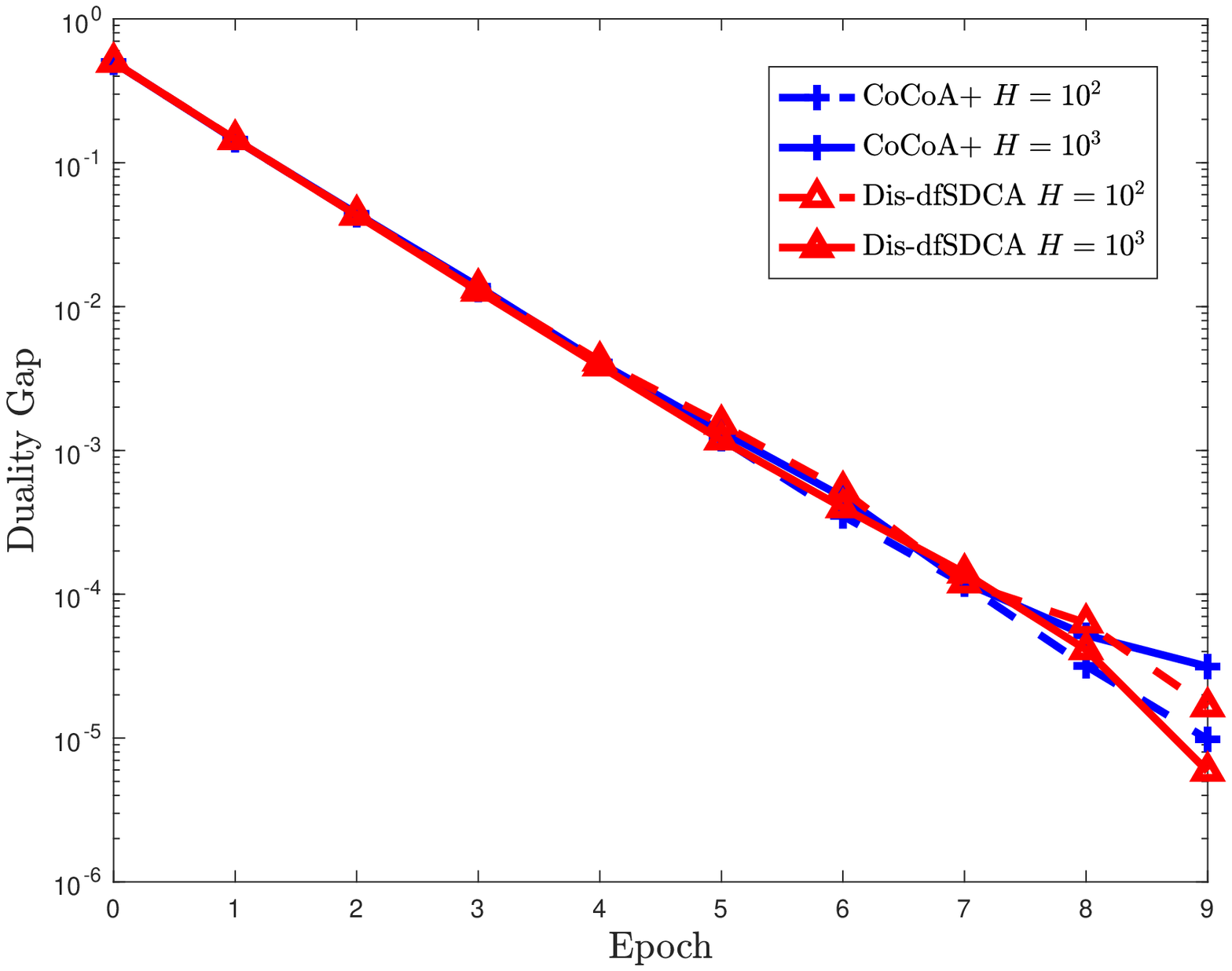}
		\caption{RCV1}
	\end{subfigure}
	\caption{Figures (a) - (c) present the convergence of  duality gap of compared methods in terms of time. Figures (d) - (f) present the convergence of  duality gap of compared methods in terms of epoch number. 
		We train IJCNN1 dataset with $4$ workers, COVTYPE  dataset with $8$ workers and RCV1 dataset with $16$ workers.}
	\label{convergence}
\end{figure}

\section{Experiments }
In this section, we conduct two simulated experiments on the distributed system with straggler problem. There are mainly three goals, firstly, we want to verify that our Dis-dfSDCA has linear convergence rate for the convex and smooth problem; secondly, we would like to make sure that our method has better speedup property than other primal-dual methods; thirdly, we would like to show that our method is also fit for non-convex losses.

Our algorithm is implemented using C++, and the point-to-point communication between worker and server is handled by openMPI \cite{gabriel04:_open_mpi}. We use Armadillo library \cite{sanderson2016armadillo} for efficient matrix computation.  Experiments are performed on Amazon Web Services, and each node is a t2.medium instance which has two virtual CPUs.  In our distributed system, we simulate the straggler problem by forcing one selected worker node to the delaying state for $m$ times as long as the normal computing time of other normal workers with probability $p$. In our experiments, we set $p=0.2$ and $m$ is selected from $[0,10]$ randomly. In practice, all nodes have a tiny possibility of being delayed. The setting in our experiments is to verify that our algorithm is robust to straggler problem, even in the extreme situation.

\subsection{Convex Case}
In our experiment, we optimize quadratic loss with $\ell_2 $ regularization term to solve binary classification problem:
\begin{eqnarray}
	\min_{w \in \mathbb{R}^d} \frac{1}{n} \sum\limits_{i=1}^n \frac{1}{2}(x_i^T w - y_i)^2 +  \frac{\lambda}{2} \| w\|^2
	\label{problem}
\end{eqnarray}
where $\lambda=0.1$. Datasets in our experiments are from LIBSVM \cite{CC01a}. 
Table \ref{table_data} shows brief details of each dataset. In this problem, because $\nabla \phi_i(w)$ can be written as $\nabla \phi_i(x_i^Tw)$, we just need to store $\hat \alpha \in \mathbb{R}^n$, and recover $\alpha \in \mathbb{R}^{d \times n} $ through $a_i = x_i \hat \alpha_i$. Therefore the space complexity is $O(n)$.

We compare our method with CoCoA+ \cite{ma2015adding}, which is the state-of-the-art distributed primal-dual optimization framework. We reimplement CoCoA+ framework using C++, and use SDCA as the local solver. Learning rate $\eta$ in our method is selected from $\eta = \{1,0.1,0.001,0.0001\}$.
\begin{table}[h]
	\center
	\begin{tabular}{c|c|c|c}
		\hline
		\hline
		Dataset & $\#$ of samples & Dimension & Sparsity \\
		\hline
		IJCNN1 & 49,990 & 22 &  41 \% \\
		\hline
		COVTYPE & 581,012 & 54 & 22 \% \\
		\hline
		RCV1 & 677,399 & 47,236 & 0.16\% \\
		\hline
		\hline
	\end{tabular}
	\caption{Experimental datasets from LIBSVM. }
	\label{table_data}
\end{table}

\subsubsection{Convergence of Duality Gap}
We compare the duality gap convergence of compared methods in terms of time and epoch number respectively, where duality gap is well defined in \cite{shalev2013stochastic}. Experimental results are presented in Figure \ref{convergence}.
We distribute IJCNN1 dataset over $4$ workers. Figures \ref{ijcnn1_gap_epoch}  in the first column show the duality gap convergence in terms of time and epoch on IJCNN1 dataset.  From the second figure, it is easy to know that Dis-dfSDCA and CoCoA+ have similar convergence rate. Since CoCoA+ has linear convergence if the problem is convex and smooth, it is verified that Dis-dfSDCA has linear convergence rate as well. In the experiment, we evaluate Dis-dfSDCA when we set different amount of local computations, $H=10^2$ and $H=10^3$. Results show that our method is faster than CoCoA+ method in both two cases. The reason is that CoCoA+ is affected by the straggler problem in the distributed system. We also optimize problem  (\ref{problem}) with COVTYPE dataset using $8$ workers, and RCV1 dataset using $16$ workers. We can draw the similar conclusion from the  results of  other two datasets.

\begin{figure}[t]
	\centering
	\begin{subfigure}[b]{0.45\textwidth}
		\centering
		\includegraphics[width=3in]{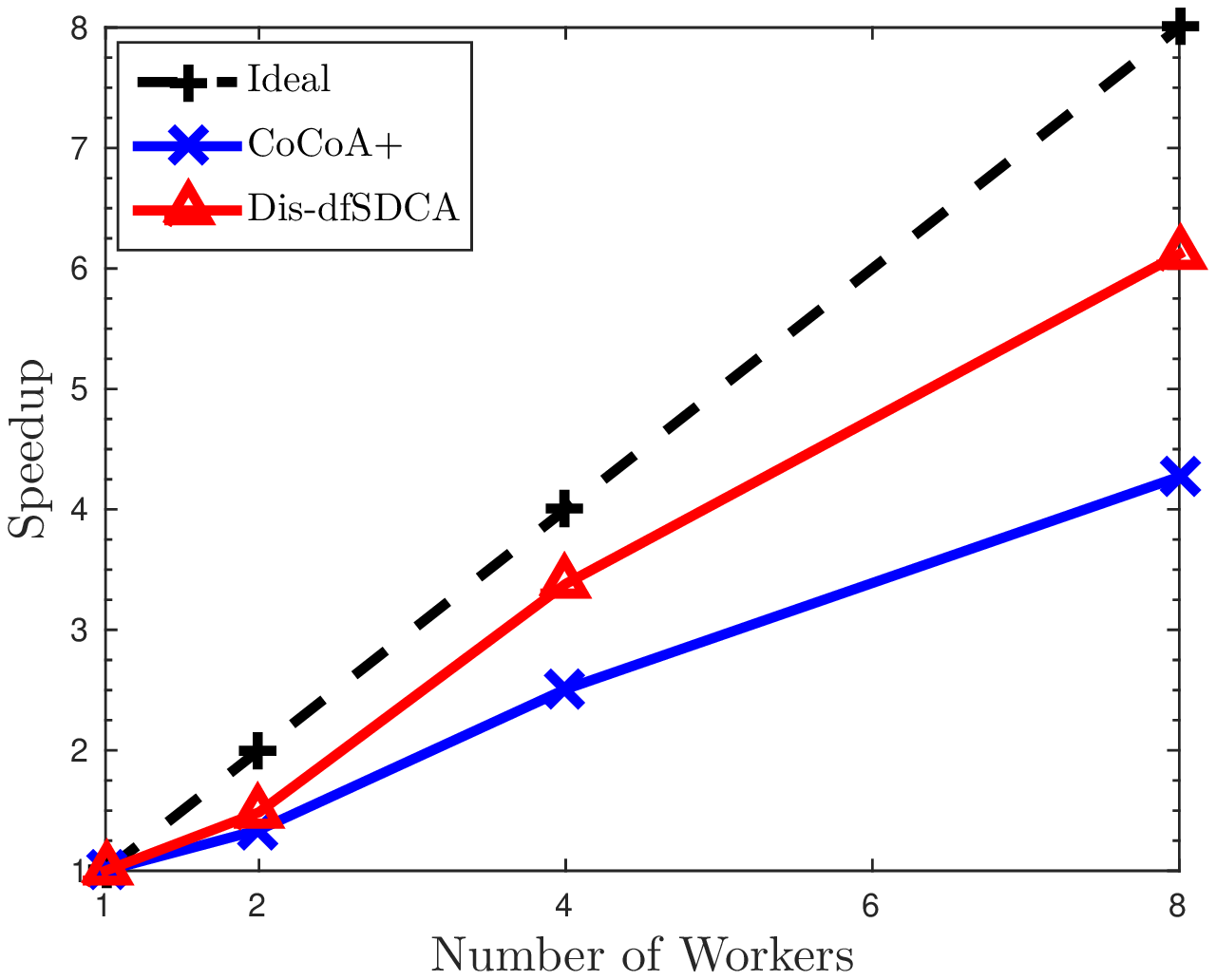}
	\end{subfigure}
	\begin{subfigure}[b]{0.45\textwidth}
		\centering
		\includegraphics[width=3in]{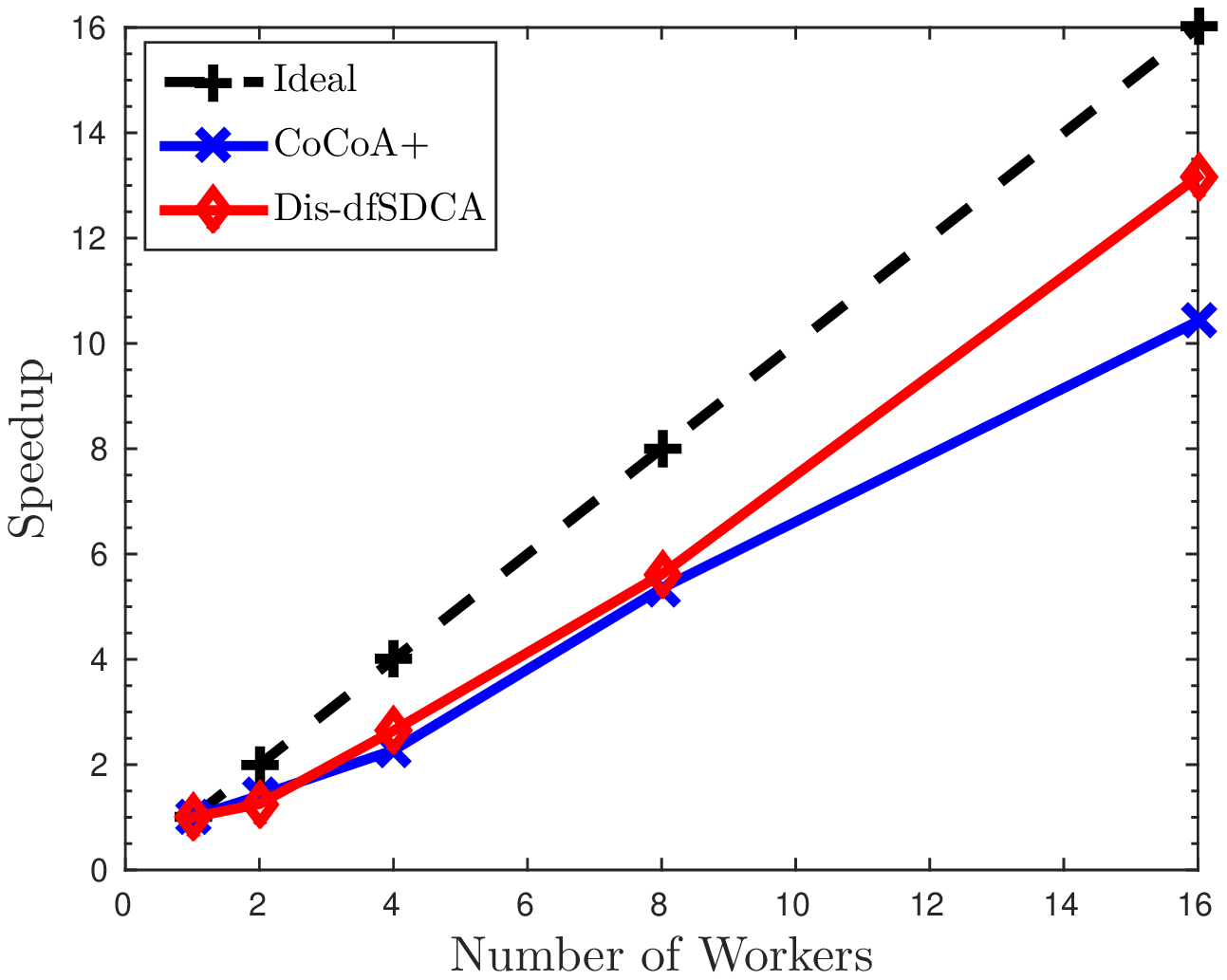}
	\end{subfigure}
	\begin{subfigure}[b]{0.8\textwidth}
		\includegraphics[width=5.5in]{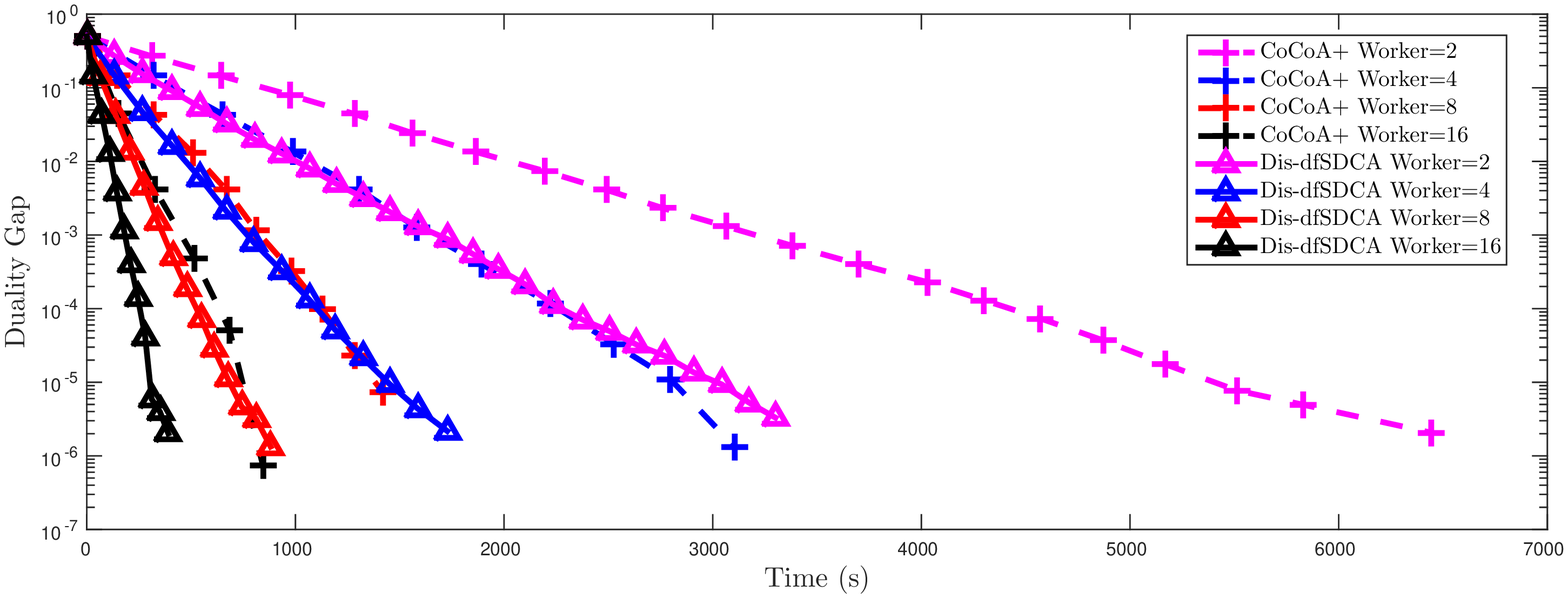}
	\end{subfigure}
	\caption{Time speedup in terms of the number of workers. Row $1$ left: IJCNN1; Row $1$ right: COVTYPE; Row $2$: RCV1. }
	\label{workers}
\end{figure}	


\subsubsection{Speedup}
In this section, we evaluate the scaling up ability of compared methods. The first row of Figure \ref{workers} presents the speedup of compared methods on IJCNN1 and COVTYPE datasets.  Speedup is defined as follows:
\begin{eqnarray}
	\text{ Time speedup } = \frac{\text{Running time for serial computation}}{\text{Running time of using } K \text{ workers}}
\end{eqnarray}
Figure in the second row shows the convergence of duality gap on RCV1 on multiple machines. It is obvious that Dis-dfSDCA always converges faster than CoCoA+ when they have the same number of workers. Experimental results verify that Dis-dfSDCA has better speedup property than CoCoA+ when there is straggler problem.

\subsection{Non-convex Case}
In this experiment, we optimize the following convex objective, which is an essential step for principal component analysis in \cite{garber2015fast}:
\begin{eqnarray}
	\label{exp_pca}
	\min\limits_{w \in \mathbb{R}^d} \frac{1}{n}\sum\limits_{i=1}^n \frac{1}{2}w^T \left((\mu - \lambda)  - x_ix_i^T\right)w - b^Tw + \frac{\lambda}{2} \|w\|^2
\end{eqnarray}
We conduct the experiment on synthetic data and generate $n = 500,000$ random vectors $\{x_1, ..., x_{500,000} \}\in \mathbb{R}^{500}$ which are mean subtracted and normalized to have Euclidean  norm $1$.	$C= \frac{1}{n} \sum_{i=1}^n x_ix_i^T $ denotes covariance matrix, $b\in \mathbb{R}^d$ denotes a random vector and we let $\mu = 100$, $\lambda = 10^{-4}$ in the experiment.
\begin{figure}[t]
	\centering
	\includegraphics[width=4in]{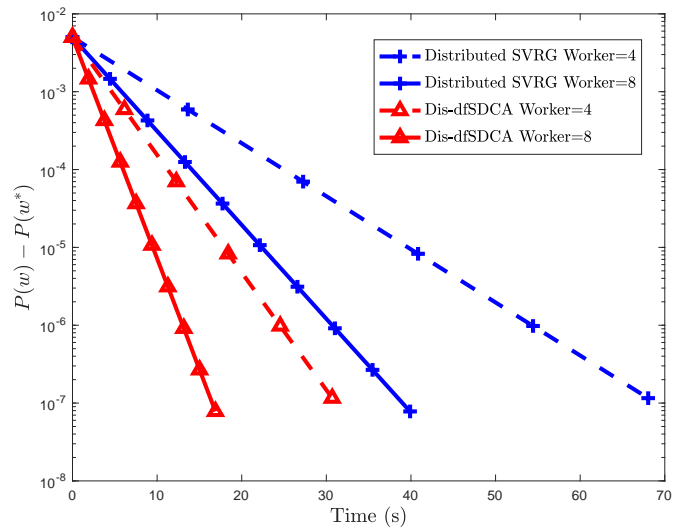}
	\caption{Suboptimum ($P(w) - P(w^*)$) convergence of compared methods in terms of time. $w^*$ denotes the optimal solution to problem (\ref{exp_pca}) , and it is obtained by running Dis-dfSDCA until convergence. }
	\label{pca}
\end{figure}
Because each $\phi_i$ is probably non-convex, CoCoA is not able to solve this problem. In this experiment, we compare with Distributed asynchronous SVRG \cite{huo2017asynchronous}.

In Figure \ref{pca}, it is obvious that Dis-dfSDCA runs faster than Distributed SVRG when there are $4$ workers. We can observe the similar phenomenon when there are $8$ workers. This observation is reasonable because Distributed SVRG needs to compute two gradients in each inner iteration and full gradient in each outer iteration. Dis-dfSDCA is faster because it only needs to compute one gradient in each iteration. However, Dis-dfSDCA needs $O(nd)$  space for storing $\alpha$ , because $\nabla \phi_i(w)$ cannot be written as $\nabla \phi_i(x_i^Tw)x_i$ in this problem.

\vspace*{-3pt}
\section{Conclusion}
In this paper, we proposed Distributed Asynchronous Dual Free Coordinate Ascent (Dis-dfSDCA) method for distributed machine learning. We addressed two challenging issues in previous primal-dual distributed optimization methods: firstly, Dis-dfSDCA does not rely on the dual formulation, and can be used to solve the non-convex problem; secondly, Dis-dfSDCA uses asynchronous communication and can be applied on the complicated distributed system where there is straggler problem. We also analyze the convergence rate of Dis-dfSDCA and prove linear convergence even if the loss functions are non-convex, as long as the sum of non-convex objectives is convex. We conduct experiments on the simulated distributed system with straggler problem, and all experimental results consistently verify our theoretical analysis.

\bibliographystyle{apalike}
\bibliography{ASDCA}

\appendix

\noindent \textbf{Proof to Lemma \ref{lem2}}
\begin{proof}
	In our proof, we suppose that there are no duplicate samples in $I_t$. According to our algorithm, we know that:  
	\begin{eqnarray}
	v^{t} =  \sum\limits_{i\in I_t} \left(\nabla \phi_i(w^{d(t)}) + \alpha_i^{d(t)} \right)  =
	\sum\limits_{i\in I_t} v_i^t
	\end{eqnarray}
	where $|I_t|=H$ and $\mathbb{E}[v_i^{t}] = \nabla P(w^{d(t)})$. 
	Following the proof in \cite{shalev2015sdca}, we define $A_t$ and $B_t$ as follows:
	\begin{eqnarray}
	A_t &= &\mathbb{E} \| \alpha_i^{t}- \alpha_i^*\|^2\\
	B_t& = & \mathbb{E}\|w^{t} - w^* \|^2
	\end{eqnarray}
	Defining $\beta = \eta \lambda n$, so in iteration $t$, $\alpha_i^{t+1} = (1-\beta)\alpha_i^{t} + \beta (-\nabla \phi_i(w^{d(t)})) $, we have:
	\begin{eqnarray}
	&&\mathbb{E}[A_{t+1} - A_{t}] \nonumber \\
	&=&  \mathbb{E} \left[\frac{1}{n} \sum\limits_{i=1}^n  \| \alpha_i^{t+1} - \alpha_i^* \|^2  - \frac{1}{n}\sum\limits_{i=1}^n \| \alpha_i^{t} - \alpha_i^* \|^2\right]  \nonumber \\
	&=& \mathbb{E}\left[ \frac{1}{n} \sum\limits_{i \in I_t}\left( \| \alpha_i^{t+1} - \alpha_i^* \|^2  -  \| \alpha_i^{t} - \alpha_i^* \|^2 \right)  \right]\nonumber \\
	& =  &\mathbb{E}\left[ \frac{1}{n} \sum\limits_{i \in I_t} \| (1-\beta)(\alpha_i^{t} - \alpha_i^*)  + \beta (-\nabla \phi_i(w^{d(t)}) - \alpha_i^*) \|^2 - \frac{1}{n}\|\alpha_i^{t} - \alpha_i^* \|^2\right] \nonumber \\
	&=& \mathbb{E}\biggl[ \frac{1}{n}\sum\limits_{i \in I_t} \biggl( (1-\beta)\|\alpha_i^{t} - \alpha_i^*\|^2 + \beta\|\nabla \phi_i(w^{d(t)}) + \alpha_i^*\|^2 - \beta(1-\beta)\|\alpha_i^{t}+ \nabla \phi_i(w^{d(t)})\|^2 - \|\alpha_i^{t}- \alpha_i^*\|^2 \biggr) \biggr] \nonumber \\
	& =&  \eta \lambda H \biggl( - \mathbb{E}\|\alpha_i^{t} - \alpha_i^*\|^2 + \mathbb{E}\|\nabla \phi_i(w^{d(t)}) + \alpha_i^* \|^2 \biggr) - \eta \lambda(1-\beta) \sum\limits_{i \in I_t}\mathbb{E}\|v_i^t\|^2 \nonumber \\
	&\leq & \eta \lambda H \biggl( - \mathbb{E}\|\alpha_i^{t} - \alpha_i^*\|^2 + \mathbb{E}\|\nabla \phi_i(w^{d(t)}) + \alpha_i^* \|^2 \biggr) - \eta \lambda(1-\beta) \mathbb{E}\|v^t\|^2 
	\label{a_111}
	\end{eqnarray}
	where the last inequality follows from that 
	$\sum\limits_{i \in I_t}\mathbb{E}\|v_i^t\|^2 \geq \mathbb{E}\|\sum\limits_{i \in I_t} v_i^t\|^2=\mathbb{E}\| v^t\|^2 $.  
	Because $\alpha_i^* = -\nabla \phi_i(w^*)$, we have the following inequality:
	\begin{eqnarray}
	\label{ineq_1}
	&&\mathbb{E}\|\nabla \phi_i(w^{d(t)})+ \alpha^* \|^2 \nonumber \\
	&=&  \mathbb{E}[\|\ \nabla \phi_i(w^{d(t)}) - \nabla \phi_i(w^{*}) \|^2] \nonumber \\ 
	&\leq & 2\mathbb{E} \|\nabla \phi_i(w^{d(t)}) - \nabla \phi_i(w^{t}) \|^2  + 2 \mathbb{E}  \|\nabla \phi_i(w^{t}) - \nabla \phi_i(w^{*}) \|^2\nonumber \\
	&\leq & 2L^2 \mathbb{E}   \| w^{t} - w^{d(t)}\|^2 + 2 \mathbb{E}   \|\nabla \phi_i(w^{t}) - \nabla \phi_i(w^{*}) \|^2\nonumber \\
	&\leq& 2L^2\eta^2    \mathbb{E}\|\sum\limits_{j=d(t)}^{t-1} v^j\|^2 + 4L\mathbb{E}\bigg( P(w^{t}) - P(w^{*})  - \frac{\lambda}{2} \|w^{t} - w^{*} \|^2 \bigg) \nonumber \\
	&\leq& 2L^2\eta^2 \tau  \sum\limits_{j=d(t)}^{t-1} \mathbb{E}\|v^j\|^2 + 4L\mathbb{E}\bigg( P(w^{t}) - P(w^{*})  - \frac{\lambda}{2} \|w^{t} - w^{*} \|^2 \bigg) 
	\end{eqnarray}
	where the first inequality follows from Lemma \ref{ex_lem2}, the second inequality follows from Lemma \ref{ex_lem3},  the third and the last  inequalities follow from the Assumption \ref{time_delay}.
	In addition, it also follows that:
	\begin{eqnarray}
	\label{b_111}
	\mathbb{E} [B_{t+1} - B_{t}] \nonumber &=& \mathbb{E}\|w^{t+1} - w^* \|^2 -\mathbb{E}\|w^{t} - w^* \|^2 \nonumber \\
	& = & -2 \eta \mathbb{E} \left< w^{t}-w^*, v^t\right> + \eta^2 \mathbb{E}\|v^t\|^2
	\end{eqnarray}
	We can know $ \mathbb{E} \left< w^{t}-w^*, v^t\right> $ is lower bounded that:
	\begin{eqnarray}
	&&	\mathbb{E} \left< w^{t}-w^*, v^t\right>  \nonumber \\
	&=& \sum\limits_{i \in I_t}	\mathbb{E} \left< w^{d(t)}-w^*, v_i^t\right> + \sum\limits_{i \in I_t}	\mathbb{E} \left< w^{t}-w^{d(t)}, v_i^t\right>  \nonumber \\
	&=&  H \left< w^{d(t)} -w^*,\nabla P(w^{d(t)})\right> +H  \left< w^t - w^{d(t)}, \nabla P(w^{d(t)})\right> \nonumber\\
	&\geq& H\left( P(w^{d(t)}) - P(w^*) \right)+  H\left< w^t - w^{d(t)}, \nabla P(w^{d(t)})\right> 
	\end{eqnarray}
	where the equality follows form that $v_i^t$ is not relevant to the variable before $w^{t+1}$ and the inequality follows from the convexity of $P(w)$. \QEDB \\
\end{proof}

\noindent \textbf{Proof to Theorem \ref{them_convex}}
\begin{proof}
	We define $C_{t+1} = c_aA_{t+1} + c_bB_{t+1}$ and set $c_a=\frac{1}{2\lambda L}$, $c_b=1$.  
	Inputting Lemma \ref{lem2} in the equation, we have:
	\begin{eqnarray}
	\mathbb{E}[C_{t+1}] &=&  c_aA_{t+1} + c_bB_{t+1} \nonumber \\
	&\leq & c_a (1-  \eta \lambda H) \mathbb{E}\|\alpha_i^{t} - \alpha_i^*\|^2 + 2c_a \lambda \tau HL^2\eta^3   \sum\limits_{j=d(t)}^{t-1} \mathbb{E}\|v^j\|^2 -c_a \eta \lambda(1-\beta)  \mathbb{E}\|v^t\|^2   \nonumber\\
	&& + 4c_a \eta \lambda H L\bigg( P(w^{t}) - P(w^{*})  - \frac{\lambda}{2}\mathbb{E} \|w^{t} - w^{*} \|^2 ]\bigg)   +c_b\eta^2 \mathbb{E} \|v^t\|^2 \nonumber \\ 
	&&
	-2c_b\eta \biggl( H \left( P(w^{d(t)}) - P(w^*) \right)+  H\left< w^t - w^{d(t)},\nabla P(w^{d(t)})\right>  \biggr) + c_b \mathbb{E}\|w^t - w^* \|^2 \nonumber \\
	&\leq &(1-\eta \lambda H)  \mathbb{E}[C_t]  + 2\eta  H  \left(P(w^t) - P(w^{d(t)})  - \left<w^t - w^{d(t)}, \nabla P(w^{d(t)}) \right> \right) \nonumber \\
	&&+\left(  \frac{\eta^2 \lambda n}{2L}  + \eta^2  - \frac{\eta}{2L}   \right) \mathbb{E}\|v^t\|^2 +   \tau HL\eta^3   \sum\limits_{j=d(t)}^{t-1} \mathbb{E}\|v^j\|^2  \nonumber \\
	&\leq & (1-\eta \lambda H) \mathbb{E}[C_t]  +\left(  \frac{\eta^2 \lambda n}{2L}  + \eta^2 - \frac{\eta}{2L}   \right) \mathbb{E}\|v^t\|^2 +   2\tau HL\eta^3   \sum\limits_{j=d(t)}^{t-1} \mathbb{E}\|v^j\|^2
	\end{eqnarray}
	where the last inequality follows from the $L$-smooth of $P(w)$: 
	\begin{eqnarray}
	P(w^t) &\leq& P(w^{d(t)}) + \left<w^t - w^{d(t)}, \nabla P(w^{d(t)}) \right> + \frac{L}{2}\|w^t - w^{d(t)}\|^2  \nonumber \\
	&\leq & P(w^{d(t)}) + \left<w^t - w^{d(t)}, \nabla P(w^{d(t)}) \right> + \frac{L\tau \eta^2}{2}\sum\limits_{j=d(t)}^{t-1} \|v^j\|^2 
	\end{eqnarray} 
	Adding the above inequality from $t=0 $ to $t=T-1$, we have that:
	\begin{eqnarray}
	\sum\limits_{t=0}^{T-1} \mathbb{E} [C_{t+1}] &\leq & \sum\limits_{t=0}^{T-1} (1-\eta \lambda H)\mathbb{E}[C_t]  + \left(  \frac{\eta^2 \lambda n}{2L}  + \eta^2 - \frac{\eta}{2L}   \right)  	\sum\limits_{t=0}^{T-1} \mathbb{E}\|v^t\|^2 +   2\tau HL\eta^3   	\sum\limits_{t=0}^{T-1} \sum\limits_{j=d(t)}^{t-1} \mathbb{E}\|v^j\|^2 \nonumber \\
	&\leq &\sum\limits_{t=0}^{T-1} (1-\eta \lambda H)\mathbb{E}[C_t]  + \left(2H\tau^2 \eta^2 + \frac{\eta^2 \lambda n}{2L}  + \eta^2 - \frac{\eta}{2L}   \right)  	\sum\limits_{t=0}^{T-1} \mathbb{E}\|v^t\|^2	
	\end{eqnarray}
	where the last inequality follows from Assumption \ref{time_delay}  and $\eta L \leq 1$. If $2H\eta^2\tau^2 + \frac{\eta^2 \lambda n}{2L}  + \eta^2  - \frac{\eta}{2L}   \leq 0$ such that:
	\begin{eqnarray}
	\eta \leq \frac{1}{4HL\tau^2 + \lambda n + 2L }
	\end{eqnarray}
	Therefore, we have:
	\begin{eqnarray}
	\sum\limits_{t=0}^{T-1}	\mathbb{E} [C_{t+1}] &\leq& \sum\limits_{t=0}^{T-1} (1-\eta \lambda H)  \mathbb{E}[C_t] \nonumber \\
	&\leq & \sum\limits_{t=1}^{T-1}  \mathbb{E}[C_t]  + (1-\eta \lambda H)  C_0
	\end{eqnarray}
	We complete the proof.
	\QEDB	\\
\end{proof}

\noindent \textbf{Proof to Lemma \ref{non_lem2}}
\begin{proof}
	As per the smoothness of $\phi_i$, we have:
	\begin{eqnarray}
	\label{them_iq_1}
	\mathbb{E}\|\nabla \phi_i(w^{d(t)}) + \alpha_i^*\|^2 &=&  \mathbb{E}\|\nabla \phi_i(w^{d(t)})  -\nabla  \phi_i( w^*)\|^2 \nonumber \\
	&\leq& L^2 \mathbb{E}\| w^{d(t)} - w^*\|^2 \nonumber \\
	&\leq & 2 L^2 \mathbb{E}\| w^{d(t)} - w^t\|^2 + 2 L^2 \mathbb{E}\| w^t - w^*\|^2 \nonumber \\
	&\leq & 2 L^2 \eta^2 \tau \sum\limits_{j=d(t)}^{t-1}  \mathbb{E}\| v^j\|^2 + 2 L^2 \mathbb{E}\| w^t - w^*\|^2
	\end{eqnarray}
	We can also bound $-\mathbb{E} \left< w^{t}-w^*, v^t\right> $ as follows:
	\begin{eqnarray}
	\label{them_iq_2}
	-\mathbb{E} \left< w^{t}-w^*, v^t\right>  &= &-H\mathbb{E} \left< w^{t}-w^*, \nabla P(w^{d(t)})\right> \nonumber\\
	&=& -H\mathbb{E} \left< w^{t}-w^*, \nabla P(w^{t})\right> -H \mathbb{E} \left< w^{t}-w^*, \nabla P(w^{d(t)}) - \nabla P(w^{t})\right>\nonumber\\
	&\leq & -\lambda H \mathbb{E} \|w^t - w^*\|^2 + \frac{\gamma H}{2} \mathbb{E}\|w^t - w^*\|^* + \frac{H}{2\gamma} \mathbb{E}\|\nabla P(w^t) - \nabla P(w^{d(t)})\|^2 \nonumber \\
	&\leq & -(\lambda - \frac{\gamma}{2})H \mathbb{E} \|w^t - w^* \|^2 + \frac{H (L+\lambda)^2\eta^2\tau}{2\gamma} \sum\limits_{j=d(t)}^{t-1} \mathbb{E} \| v^j\|^2
	\end{eqnarray}
	where the first inequality follows from the strong convexity of $P$ such that $\left<w^t - w^*, \nabla P(w^t) \right> \geq P(w^t) - P(w^*) + \frac{\lambda}{2} \|w^t - w^*\|^2$ and $P(w^t) - P(w^*) \geq \frac{\lambda}{2}\| w^t - w^*\|^2$.  Defining $\gamma = \frac{\lambda}{2}$ and
	substituting above two inequalities into (\ref{a_111}) and (\ref{b_111}) respectively, we complete the proof.\QEDB \\
\end{proof}

\noindent \textbf{Proof to Theorem \ref{them_nonconvex}}
\begin{proof}
	We define $C_{t+1} = c_aA_{t+1} + c_bB_{t+1}$ and set $c_a=\frac{1}{ 4L^2}$, $c_b=1$.  Inputting Lemma \ref{non_lem2}   in the equation, we have:
	
	\begin{eqnarray}
	\mathbb{E}[C_{t+1}] &=&  c_aA_{t+1} + c_bB_{t+1} \nonumber \\
	&\leq &c_a (1-  \eta \lambda H) \mathbb{E}\|\alpha_i^{t}- \alpha_i^*\|^2  + (c_b + 2c_a \eta \lambda HL^2 -2c_b\eta H (\lambda - \frac{\gamma}{2}) ) \mathbb{E}\|w^t - w^*\|^2 \nonumber \\
	&&+ \left( c_b \eta^2 - c_a \eta \lambda (1-\beta)
	\right) \mathbb{E}\|v^t\|^2 + \left(2c_a \lambda H\tau L^2  \eta^3  + c_b \frac{H\tau (L+\lambda)^2\eta^3}{\gamma} \right)\sum\limits_{j=d(t)}^{t-1} \mathbb{E} \| v^j\|^2  \\
	&=& (1- \eta \lambda H) \mathbb{E}[C_t]+ \left( c_b \eta^2 - c_a \eta \lambda (1-\beta)
	\right) \mathbb{E}\|v^t\|^2 + \left(2c_a \lambda H\tau L^2  \eta^3  + c_b \frac{H\tau (L+\lambda)^2\eta^3}{\gamma} \right)\sum\limits_{j=d(t)}^{t-1} \mathbb{E} \| v^j\|^2\nonumber 
	\end{eqnarray}
	where $\gamma = \frac{\lambda}{2}$.
	Adding the above inequality from $t=0 $ to $t=T-1$, we have that:
	\begin{eqnarray}
	\sum\limits_{t=0}^{T-1} \mathbb{E}[C_{t+1}] &\leq & (1-\eta \lambda H) 	\sum\limits_{t=0}^{T-1} \mathbb{E}[C_{t}]   + \left( c_b \eta^2 - c_a \eta \lambda (1-\beta)
	\right)  	\sum\limits_{t=0}^{T-1} \mathbb{E}\|v^t\|^2  \nonumber \\
	&& + \left(2c_a \lambda H\tau L^2  \eta^3  + c_b \frac{H\tau (L+\lambda)^2\eta^3}{\gamma} \right) 	\sum\limits_{t=0}^{T-1}\sum\limits_{j=d(t)}^{t-1} \mathbb{E} \| v^j\|^2 \nonumber \\
	&\leq & (1-\eta \lambda H) 	\sum\limits_{t=0}^{T-1} \mathbb{E}[C_{t}]  +  \left( c_b \eta^2 - c_a \eta \lambda (1-\beta) + 2c_a \lambda H\tau^2 L^2  \eta^3  + c_b \frac{H\tau^2 (L +\lambda)^2\eta^3}{\gamma}
	\right)  	\sum\limits_{t=0}^{T-1} \mathbb{E}\|v^t\|^2  \nonumber \\
	&\leq & 	\sum\limits_{t=1}^{T-1} \mathbb{E}[C_{t}]   +  (1-\eta \lambda H) \mathbb{E}[C_0]
	\end{eqnarray}	
	where the last inequality holds as long as:
	\begin{eqnarray}
	\eta \leq \frac{\lambda^2}{2HL\tau^2\lambda^2 + 8 HL \tau^2  (L+\lambda)^2 + 4\lambda L^2 + n \lambda^3 }
	\end{eqnarray} 
	such that $2c_a \lambda HL^2 \tau^2 \eta^3  + c_b \frac{H(L+\lambda)^2\tau^2\eta^3}{\gamma} + c_b \eta^2 - c_a \eta \lambda (1-\beta) \leq 0$. 	
	We complete the proof. 
	\QEDB\\
\end{proof}

\section{Extra Lemmas}

\begin{lemma}[\cite{reddi2016fast}]
	For random variables $z_1, . . . , z_r$ are independent and mean 0, we have:
	\begin{eqnarray}
	\mathbb{E} [ \| z_1 + ... + z_r\|^2 ]  &=& \mathbb{E} [\|z_1\|^2 + ...+ \|z_r\|^2 ]
	\end{eqnarray}
	\label{ex_lem1}
\end{lemma}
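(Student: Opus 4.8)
The plan is to expand the squared norm of the sum by bilinearity of the inner product and then kill all the off-diagonal cross terms using the independence and mean-zero hypotheses. Writing $\langle \cdot , \cdot \rangle$ for the Euclidean inner product on $\mathbb{R}^d$, I would start from the algebraic identity
\begin{eqnarray}
\Big\| \sum_{i=1}^r z_i \Big\|^2 = \sum_{i=1}^r \|z_i\|^2 + \sum_{\substack{i,j=1 \\ i \neq j}}^r \langle z_i, z_j \rangle ,
\end{eqnarray}
which holds pointwise for every realization of the random vectors. Taking expectations and invoking linearity of $\mathbb{E}$ on both the diagonal and off-diagonal sums, the diagonal terms immediately reproduce the desired right-hand side $\sum_{i=1}^r \mathbb{E}\|z_i\|^2$, so the entire task reduces to showing that each off-diagonal expectation $\mathbb{E}[\langle z_i, z_j \rangle]$ vanishes for $i \neq j$.

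First I would handle the cross terms by reducing the inner product to a coordinatewise scalar computation: $\langle z_i, z_j \rangle = \sum_{k=1}^d z_{i,k}\, z_{j,k}$, where $z_{i,k}$ denotes the $k$-th coordinate of $z_i$. Then, for fixed $i \neq j$ and each coordinate $k$, the scalar random variables $z_{i,k}$ and $z_{j,k}$ are functions of the independent vectors $z_i$ and $z_j$ respectively, hence are themselves independent; this lets me factor the expectation as $\mathbb{E}[z_{i,k}\, z_{j,k}] = \mathbb{E}[z_{i,k}]\, \mathbb{E}[z_{j,k}]$. Since each $z_i$ has mean $0$, every coordinate mean $\mathbb{E}[z_{i,k}]$ is $0$, so each product of marginal means is $0$ and therefore $\mathbb{E}[\langle z_i, z_j \rangle] = \sum_{k=1}^d \mathbb{E}[z_{i,k}]\,\mathbb{E}[z_{j,k}] = 0$. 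Substituting back, the entire off-diagonal double sum has expectation zero, which yields the claimed identity.

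I do not expect any genuine obstacle here, as this is the standard variance-decomposition argument for sums of independent mean-zero random vectors; the only point requiring minor care is the justification that pairwise independence of the full vectors transfers to independence (hence factorization of the expectation) at the level of individual coordinate products, and the implicit assumption that the $\mathbb{E}\|z_i\|^2$ are finite so that all the expectations above are well defined and Fubini-type interchange of the finite coordinate sum with expectation is valid. Both of these are immediate under the stated hypotheses, so the proof is essentially a one-line expansion followed by the cross-term cancellation.
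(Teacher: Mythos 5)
Your proof is correct: the pointwise expansion of the squared norm, the coordinatewise factorization of the cross terms via independence, and the cancellation by the mean-zero hypothesis together give exactly the claimed identity, and your remark about finite second moments covers the only measure-theoretic point worth mentioning. Note that the paper itself offers no proof to compare against --- it states this as an auxiliary lemma cited directly from the reference (Reddi et al., 2016) --- so your argument, which is the standard variance decomposition for sums of independent mean-zero random vectors (and in fact uses only pairwise independence), simply supplies the routine details the paper leaves to the citation.
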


\begin{lemma}
	For any $z_1,...,z_r$, it holds that:
	\begin{eqnarray}
	\| z_1 + ... + z_r\|^2 &  \leq  & r (\|z_1\|^2 +...+ \|z_r\|^2 )
	\end{eqnarray}
	\label{ex_lem2}
\end{lemma}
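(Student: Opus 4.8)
The plan is to prove this purely algebraic inequality via the convexity of the squared Euclidean norm, which makes the constant $r$ appear in the cleanest way. First I would rescale so that the argument of the norm becomes a convex combination of the $z_i$, writing
\[
\left\| \sum_{i=1}^r z_i \right\|^2 = r^2 \left\| \frac{1}{r}\sum_{i=1}^r z_i \right\|^2 .
\]
Since the map $x \mapsto \|x\|^2$ is convex, Jensen's inequality applied to the uniform weights $1/r$ gives $\left\| \frac{1}{r}\sum_{i=1}^r z_i \right\|^2 \le \frac{1}{r}\sum_{i=1}^r \|z_i\|^2$. Multiplying this bound through by $r^2$ yields exactly $\left\| \sum_{i=1}^r z_i \right\|^2 \le r \sum_{i=1}^r \|z_i\|^2$, which is the claim.

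Two equally short alternatives are available if one prefers to avoid invoking convexity. The first chains the triangle inequality $\left\| \sum_{i=1}^r z_i \right\| \le \sum_{i=1}^r \|z_i\|$ with the scalar Cauchy--Schwarz inequality $\left(\sum_{i=1}^r \|z_i\|\right)^2 \le r \sum_{i=1}^r \|z_i\|^2$ applied to the nonnegative reals $\|z_1\|,\dots,\|z_r\|$. The second expands the left-hand side as $\left\| \sum_{i=1}^r z_i \right\|^2 = \sum_{i=1}^r\sum_{j=1}^r \langle z_i, z_j\rangle$ and bounds each of the $r^2$ cross terms by $\langle z_i, z_j\rangle \le \tfrac{1}{2}\bigl(\|z_i\|^2 + \|z_j\|^2\bigr)$, which upon summation again produces the factor $r$. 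Any of the three routes closes the argument in a line or two.

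There is essentially no obstacle to overcome here: this is the standard power-mean (or vector Cauchy--Schwarz) inequality, and equality holds precisely when all the $z_i$ coincide. The only point worth stating explicitly is that, unlike Lemma~\ref{ex_lem1}, this bound is deterministic and requires no independence or mean-zero hypothesis, so it applies verbatim to the arbitrary stale increments $v^j$ when bounding $\bigl\|\sum_{j=d(t)}^{t-1} v^j\bigr\|^2$ in the convergence analysis, where $r$ is taken to be the number of delayed terms controlled by $\tau$.
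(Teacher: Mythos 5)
Your proposal is correct; in fact, all three of your routes (Jensen's inequality applied to the convex map $x \mapsto \|x\|^2$, triangle inequality followed by scalar Cauchy--Schwarz on the nonnegative reals $\|z_1\|,\dots,\|z_r\|$, and expansion into $r^2$ inner products each bounded by $\langle z_i, z_j\rangle \le \tfrac{1}{2}(\|z_i\|^2 + \|z_j\|^2)$) are complete and valid. There is, however, nothing in the paper to compare them against: Lemma~\ref{ex_lem2} is stated in the ``Extra Lemmas'' appendix with no proof and no citation, treated as a standard fact, so your write-up supplies an argument the authors omit entirely. Your closing remark also matches precisely how the paper uses the lemma: it is the deterministic bound behind steps such as $\mathbb{E}\|\sum_{j=d(t)}^{t-1} v^j\|^2 \le \tau \sum_{j=d(t)}^{t-1} \mathbb{E}\|v^j\|^2$ under Assumption~\ref{time_delay}, where the delayed increments are certainly not independent, in contrast to Lemma~\ref{ex_lem1}, which does require independent mean-zero variables. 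Your equality claim is also right and follows most transparently from the identity $r\sum_{i}\|z_i\|^2 - \|\sum_{i} z_i\|^2 = \tfrac{1}{2}\sum_{i,j}\|z_i - z_j\|^2$, which vanishes exactly when $z_1 = \cdots = z_r$.
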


\begin{lemma}[\cite{shalev2015sdca}]
	Assume that each $\phi_i(w)$ is $L$-smooth and convex. Then, for every $w$,
	\begin{eqnarray}
	\frac{1}{n}\sum\limits_{i=1}^n \|\nabla \phi_i(w) - \nabla \phi_i(w^*)\|^2 &\leq& 2L\left( P(w) - P(w^{*})  - \frac{\lambda}{2} \|w - w^{*} \|^2 \right) 
	\end{eqnarray}
	\label{ex_lem3}
\end{lemma}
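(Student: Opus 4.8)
The plan is to reduce the claim to the standard co-coercivity inequality for smooth convex functions together with an algebraic identity that rewrites the right-hand side as the Bregman divergence of $f$ at $w^*$. First I would recall that for any convex $L$-smooth function $g$ and any $x,y \in \mathbb{R}^d$,
\[
g(x) - g(y) - \left< \nabla g(y), x - y\right> \geq \frac{1}{2L}\|\nabla g(x) - \nabla g(y)\|^2.
\]
To justify this I would introduce the shifted function $h(x) = g(x) - \left< \nabla g(y), x\right>$, which is convex, $L$-smooth, and satisfies $\nabla h(y) = 0$, so that $y$ is a global minimizer of $h$. Applying the descent lemma at the trial point $x - \frac{1}{L}\nabla h(x)$ and using $h(y) \leq h\!\left(x - \frac{1}{L}\nabla h(x)\right)$ gives $h(x) - h(y) \geq \frac{1}{2L}\|\nabla h(x)\|^2$, which is exactly the displayed inequality after substituting $\nabla h(x) = \nabla g(x) - \nabla g(y)$.

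Applying this bound with $g = \phi_i$, $x = w$, $y = w^*$ for each $i$ and averaging over $i \in \{1,\dots,n\}$ yields
\[
\frac{1}{2L}\cdot\frac{1}{n}\sum_{i=1}^n \|\nabla\phi_i(w) - \nabla\phi_i(w^*)\|^2 \leq \frac{1}{n}\sum_{i=1}^n\Big( \phi_i(w) - \phi_i(w^*) - \left< \nabla\phi_i(w^*), w - w^*\right>\Big).
\]
Since $f(w) = \frac{1}{n}\sum_{i=1}^n \phi_i(w)$ and hence $\nabla f(w^*) = \frac{1}{n}\sum_{i=1}^n \nabla\phi_i(w^*)$, the averaged right-hand side collapses by linearity to $f(w) - f(w^*) - \left< \nabla f(w^*), w - w^*\right>$. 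It then remains to identify this quantity with $P(w) - P(w^*) - \frac{\lambda}{2}\|w - w^*\|^2$. Expanding $P(w) - P(w^*) = f(w) - f(w^*) + \frac{\lambda}{2}(\|w\|^2 - \|w^*\|^2)$ and $\frac{\lambda}{2}\|w - w^*\|^2 = \frac{\lambda}{2}(\|w\|^2 - 2\left< w, w^*\right> + \|w^*\|^2)$ and collecting terms gives $P(w) - P(w^*) - \frac{\lambda}{2}\|w - w^*\|^2 = f(w) - f(w^*) + \lambda\left< w - w^*, w^*\right>$. The first-order optimality of $w^*$ for the strongly convex $P$, namely $\nabla P(w^*) = \nabla f(w^*) + \lambda w^* = 0$, gives $\lambda w^* = -\nabla f(w^*)$, so $\lambda\left< w - w^*, w^*\right> = -\left< \nabla f(w^*), w - w^*\right>$, matching the averaged Bregman expression exactly. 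Multiplying through by $2L$ completes the proof.

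The only genuinely nontrivial ingredient is the per-function inequality in the first step, which is the single place where convexity and $L$-smoothness of each $\phi_i$ are used simultaneously (it is equivalent to co-coercivity of $\nabla\phi_i$); I expect justifying that lower bound to be the main obstacle. Everything afterward is linearity of the average and an identity that relies solely on the stationarity condition $\nabla P(w^*) = 0$, so it reduces to routine bookkeeping.
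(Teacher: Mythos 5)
Your proof is correct and follows essentially the same route as the argument in the cited source \cite{shalev2015sdca} (the paper itself only cites the lemma without reproducing a proof): per-function co-coercivity $\|\nabla\phi_i(w)-\nabla\phi_i(w^*)\|^2 \leq 2L\bigl(\phi_i(w)-\phi_i(w^*)-\left<\nabla\phi_i(w^*),w-w^*\right>\bigr)$, averaging over $i$, and the stationarity condition $\nabla f(w^*)=-\lambda w^*$ to convert the averaged Bregman divergence into $P(w)-P(w^*)-\frac{\lambda}{2}\|w-w^*\|^2$. All steps, including your shifted-function derivation of co-coercivity and the final algebraic identity, check out.
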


\end{document}